\newtheorem{lemma}{Lemma}
\newtheorem{assumption}{Assumption}
\newtheorem{proposition}{Proposition}
\begin{document}

\title{Stabilizing Multi-Attack Adversarial Training via Bandit Optimization}

\author{Rui Wang}
\authornote{Equal contribution.}
\affiliation{%
  \institution{Peking University}
  \city{Beijing}
  \country{China}
}
\email{2300010607@stu.pku.edu.cn}

\author{Zeming Wei}
\authornotemark[1]
\affiliation{%
  \institution{Peking University}
  \city{Beijing}
  \country{China}
}
\email{weizeming@stu.pku.edu.cn}

\author{Xiyue Zhang}
\affiliation{%
  \institution{University of Bristol}
  \city{Bristol}
  \country{United Kingdom}
  }
\email{xiyue.zhang@bristol.ac.uk}

\author{Meng Sun}
\authornote{Corresponding author.}
\affiliation{%
  \institution{Peking University}
  \city{Beijing}
  \country{China}
  }
\email{sunm@pku.edu.cn}

\renewcommand{\shortauthors}{Wang et al.}


\begin{CCSXML}
<ccs2012>
 <concept>
  <concept_id>10010147.10010257.10010293.10010294</concept_id>
  <concept_desc>Computing methodologies~Machine learning</concept_desc>
  <concept_significance>500</concept_significance>
 </concept>
 <concept>
  <concept_id>10002978.10003006.10003007.10003009</concept_id>
  <concept_desc>Security and privacy~Software and application security</concept_desc>
  <concept_significance>300</concept_significance>
 </concept>
 <concept>
  <concept_id>10010147.10010257.10010293</concept_id>
  <concept_desc>Computing methodologies~Machine learning approaches</concept_desc>
  <concept_significance>300</concept_significance>
 </concept>
</ccs2012>
\end{CCSXML}

\ccsdesc[500]{Computing methodologies~Machine learning}
\ccsdesc[300]{Security and privacy~Software and application security}
\ccsdesc[300]{Computing methodologies~Machine learning approaches}

\keywords{Robustness, Adversarial Training, Optimization, Bandit Algorithms}



\begin{abstract}
Deep Neural Networks (DNNs) remain vulnerable to diverse adversarial perturbations, motivating multi-attack adversarial training (AT) for improved robustness. However, existing methods either incur prohibitive overhead by computing all attacks at each iteration, or rely on stochastic sampling over adversarial examples, which may cause excessive parameter drift. To address these issues, we propose Calibrated Adversarial Sampling (CAS), an efficient and stable framework that reformulates multi-attack AT as a multi-armed bandit optimization problem. By sampling a single attack per iteration that dynamically balances exploration and exploitation, CAS significantly reduces training cost while mitigating optimization conflicts across attacks and controlling excessive parameter drifts. Extensive experiments demonstrate that CAS achieves superior overall robustness at low computational cost, offering a scalable and principled approach to robust generalization against multi-attack settings. Our code is available at \url{https://github.com/1240148048/CAS}.
\end{abstract}    
\maketitle

\section{Introduction}
\label{sec:intro}

Adversarial attacks~\cite{szegedy2013intriguing} have posed significant vulnerabilities to Deep Neural Networks (DNNs), where adversaries can add imperceptible~\cite{goodfellow2014explaining,croce2020reliable} or semantic~\cite{hendrycks2019benchmarking,hsiung2023towards} perturbations to craft adversarial examples that lead the target DNN to make incorrect predictions. So far, the existence of adversarial examples has raised serious concerns about the reliability of DNNs~\cite{liu2022trustworthy,li2023trustworthy,liu2023towards}, compromising their safe and trustworthy deployment in real-world scenarios.
\begin{figure}
    \centering
    \begin{tabular}{ccc}
    \includegraphics[width=0.3\linewidth]{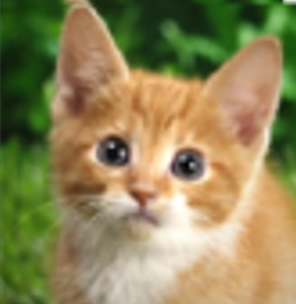}&
    \includegraphics[width=0.3\linewidth]{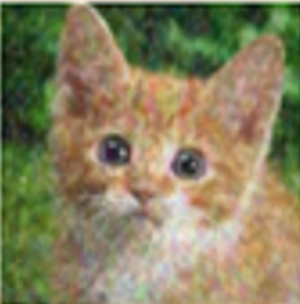}&
    \includegraphics[width=0.3\linewidth]{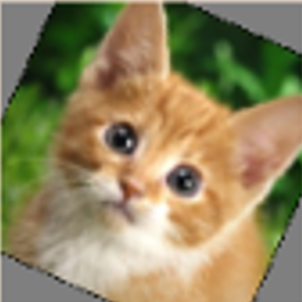}
    \\
    (a) Original & (b) $\ell_p$-norm & (c) Semantic
    \end{tabular}
    \vspace{-10pt}
    \caption{Illustration of adversarial perturbations by different attack types from~\cite{hsiung2023towards}.}
    \Description{Illustration of adversarial perturbations by different attack types from~\cite{hsiung2023towards}.}
    \label{fig:example-ae}
    \vspace{-10pt}
\end{figure}
While adversarial training (AT) has emerged as one of the most effective defenses against adversarial perturbations, it is typically formulated to optimize robustness against a single attack model. Specifically, AT solves a min--max optimization problem:
\begin{equation}
\min_{\theta} \; \mathbb{E}_{(x,y)\sim \mathcal{D}} \left[ \max_{\delta \in \mathcal{S}} \mathcal{L}(f_{\theta}(x+\delta), y) \right],
\end{equation}
where $\mathcal{S}$ defines the threat model. In practice, most existing methods instantiate $\mathcal{S}$ using a specific attack type (\textit{e.g.}, an $\ell_p$-bounded perturbation set), which inherently limits their robustness to that particular class of perturbations and hinders generalization to diverse and unforeseen adversarial conditions.

\begin{figure*}
    \centering
    \begin{tabular}{ccc}
    \includegraphics[width=0.28\linewidth]{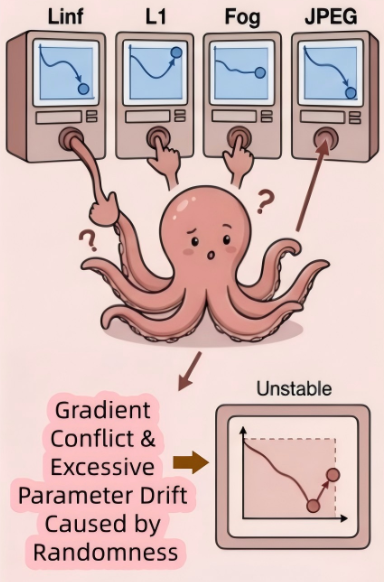}
         & 
    \includegraphics[width=0.28\linewidth]{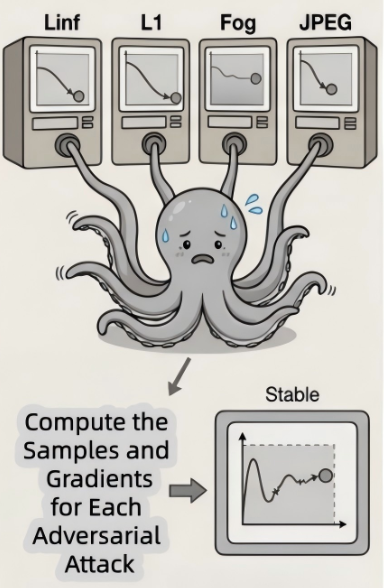}     
         &
    \includegraphics[width=0.34\linewidth]{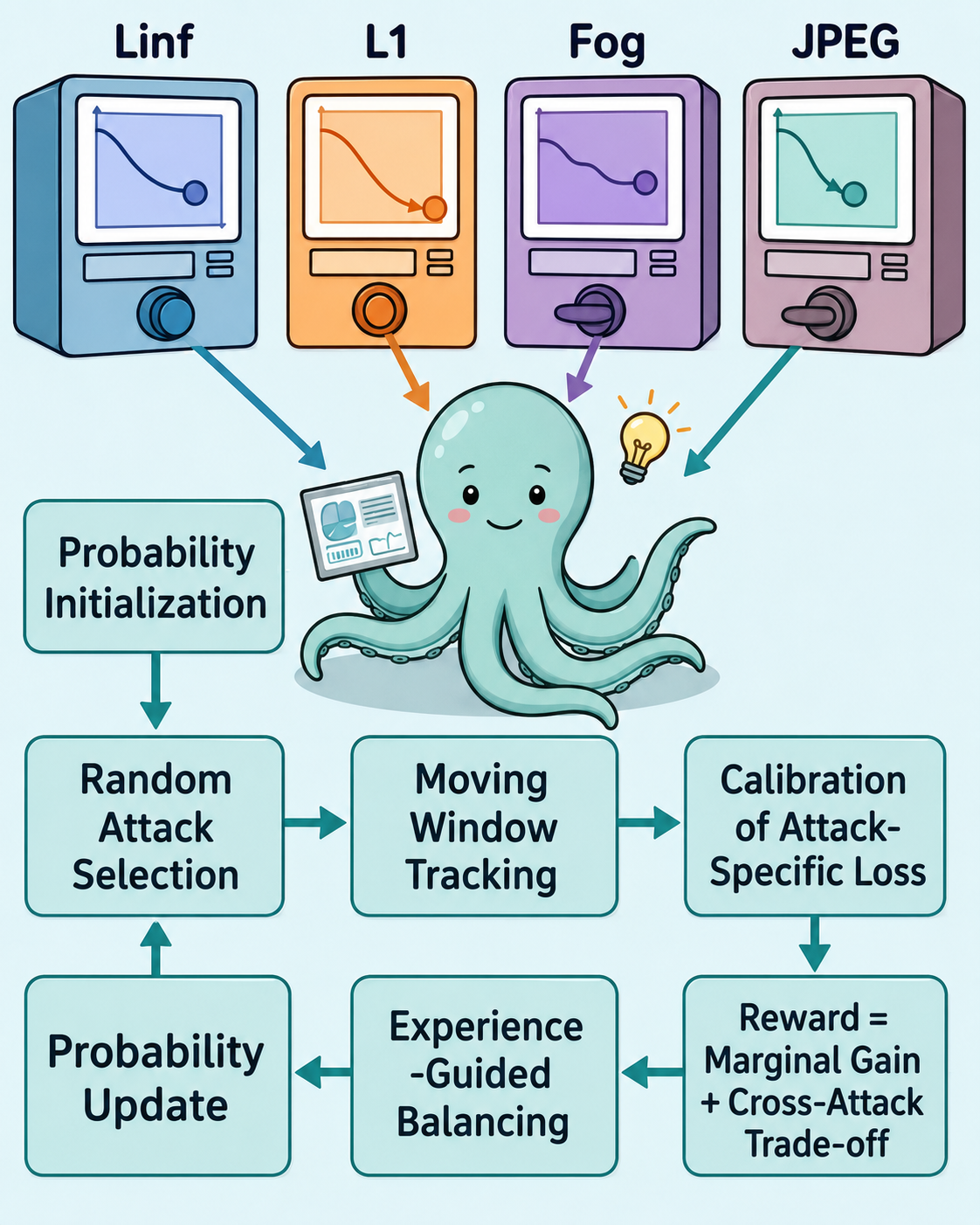}
         \\
    (a) Naive Random Sampling
         & 
    (b) Multi-task Optimization
        &
    (c) CAS (calibrated Adversarial Sampling)
    \end{tabular}
    \vspace{-10pt}
    \caption{
    A comparison between CAS and baselines. (a) Naive random sampling is efficient, but unstable; (b) Multi-task optimization is stable, but time-consuming; (c) Our CAS framework achieves both efficiency and stability.
    }
    \label{fig:pipeline}
    \Description{pipeline}
    \vspace{-10pt}
\end{figure*}

However, in real-world vision applications such as image and video analysis, DNNs must operate on visual data that undergo complex processing pipelines, during which diverse adversarial perturbations can arise. For example, models may be simultaneously exposed to pixel-level $\ell_p$ attacks as well as common distortions and transformations such as JPEG compression artifacts, Gaussian blur, resizing, and weather-induced degradations (\textit{e.g.}, fog or rain). These perturbations, which can be roughly grouped into \textbf{$\ell_p$-bounded} and \textbf{semantic} types, exhibit substantially different behaviors and impacts on model predictions. Consequently, improving robustness across multiple attack types simultaneously has become an important yet challenging problem.

A natural extension of AT is to incorporate multiple attack types into the optimization process. In such approaches, each training iteration requires generating adversarial examples and obtaining the gradients with respect to each attack. These gradients are then combined either by direct weighted averaging~\cite{tramer2019adversarial}, or through gradient projection techniques to resolve conflicting directions before performing the parameter update~\cite{yu2020gradient}. However, this line of methods suffers from two fundamental limitations. First, it incurs prohibitive computational overhead, as the computation need for all attack types at every iteration leads to costs that scale linearly with the number of attacks. Second, these approaches implicitly treat different attack objectives in a uniform manner, overlooking the fact that adversarial losses can differ substantially in scale and optimization difficulty across attack types. This mismatch often results in biased updates, where certain attacks dominate the optimization while others are not addressed adequately.

To improve efficiency, an alternative is to sample a single attack per iteration~\cite{madaan2021learning,croce2022adversarial}. While computationally appealing, such stochastic optimization strategies rely on randomly sampled adversarial examples and may induce excessive parameter drift, especially when the same attack is occasionally sampled repeatedly,
which may drive model parameters away from a desirable region that preserves robustness across multiple attack types. Moreover, naive sampling schemes fail to capture the complex and dynamic trade-offs among different robustness dimensions, leading to unstable training and suboptimal generalization.

This efficiency-stability dilemma in multi-attack AT motivates us to reconsider it as a bandit optimization problem.
Specifically, we reformulate the problem as a sequential decision-making task under partial feedback, where each attack corresponds to an arm and the learner must balance competing robustness objectives. Based on this formulation, we propose \textbf{Calibrated Adversarial Sampling (CAS)}, a fine-tuning framework for multi-attack AT. CAS normalizes attack-specific losses and maintains per-attack moving windows to estimate their marginal contributions and cross-robustness trade-offs. It further leverages a multi-armed bandit strategy to balance exploration and exploitation, reducing training bias and limiting excessive parameter drift. As a result, our approach provides a more efficient and stable solution for achieving generalized robustness under multi-attack settings.

Our main contributions are summarized as follows:
\begin{itemize}
    \item \textbf{Method.} We propose Calibrated Adversarial Sampling (CAS), a fine-tuning framework for accelerating and stabilizing multi-attack adversarial training. To our knowledge, CAS is the first multi-attack AT work using bandit optimization and quantifying the cross-attack interactions.
    
    \item \textbf{Theory.} We provide theoretical analysis of multi-attack adversarial training in a simplified setting, deriving a safe threshold for parameter drift. We further establish the convergence of CAS under stochastic attack selection.
    
    \item \textbf{Empirics.} Extensive experiments on benchmark datasets from RobustBench~\cite{li2023oodrobustbench} demonstrate that CAS consistently improves overall robustness against multiple attack types while maintaining high clean accuracy. Ablation studies further validate the effectiveness of each component in CAS.
\end{itemize}
\section{Preliminaries}
\subsection{Comprehensive Consideration of Adversarial Attacks}
\label{subsec: Unforeseen Adversarial Attacks}

While significant breakthroughs have currently been achieved in research on $\ell_p$ robustness, such as AutoAttack~\cite{croce2020reliable}, systematic investigations into semantic adversarial attacks remain comparably under-explored~\cite{croce2020reliable,liu2023towards}. Moreover, current robustness evaluations are often confined to single attack types, lacking a holistic framework that jointly considers diverse threat models. This narrow focus poses significant limitations for real-world AI deployments (\textit{e.g.}, multimedia applications such as autonomous driving and face recognition), where robustness against physically plausible and naturalistic corruptions is critical for safety-critical applications~\cite{hendrycks2019benchmarking,wei2024physical, hsiung2023towards}.

To enhance the versatility and comprehensiveness of robust fine-tuning methods in real-world applications, this work considers a broad and diverse set of adversarial attacks. From the framework proposed by~\cite{kang2019testing}, we select 17 representative semantic attacks for evaluation, including 
\textit{Wood, Elastic, Pixel, Snow, Gabor, JPEG, Glitch, Kaleidoscope, Blur, Edge, Fog, Texture, Prison, Whirlpool, Polkadot, Klotski, and Hsv}. These cover a wide range of environmental and digital perturbations that commonly arise in real-world settings. In addition, we incorporate the PerceptivePGD (PPGD) attack~\cite{zhang2018unreasonable}, a human-perception-guided method that explicitly models semantically meaningful visual changes, along with three standard  $\ell_p$-norm attacks: $\ell_\infty$, $\ell_2$, and $\ell_1$. Together, this suite comprises 21 distinct adversarial attacks, enabling a comprehensive evaluation of robustness against both semantic and norm-constrained perturbations.

\subsection{Quantifying Robustness Conflicts}

Since our research comprehensively considers 21 adversarial attacks to transcend the limitations of AT against single adversarial attack, integrating diverse adversarial attacks into a unified AT framework introduces significant challenges. In particular, the Mutually Exclusive Perturbations theory (MEPs)~\cite{kang2019testing} represents a fundamental limitation. MEPs occur when the constraint sets of two perturbation types are inherently incompatible, such that improving robustness against one attack inevitably degrades robustness against the other under fixed optimization conditions. Classic examples of this contradiction include the $\ell_p$ attack and the rotation-translation transformation as discussed in~\cite{kang2019testing}.

While the concept of MEPs qualitatively captures the inherent conflicts in achieving multi-robustness, we further conduct a quantitative study based on the pre-trained $\ell_\infty$-robust model~\cite{croce2022adversarial} on the CIFAR-10 dataset~\cite{krizhevsky2009learning}, since $\ell_\infty$ pre-trained model exhibits robustness against a wide range of adversarial attacks~\cite{madry2017towards,kotyan2019adversarial}. We choose $\ell_\infty$, $\ell_2$, $\ell_1$, and other semantic attacks. For each attack type, we sequentially perform individual fine-tuning for 3 epochs and measure the robust accuracy against all these attacks before/after this fine-tuning. As illustrated in Figure ~\ref{fig:tradeoff}, we observe several notable patterns. More complete results and analysis of this preliminary experiment can be found in Appendix~D.

\begin{figure}
    \centering
    \includegraphics[width=1.0\linewidth]{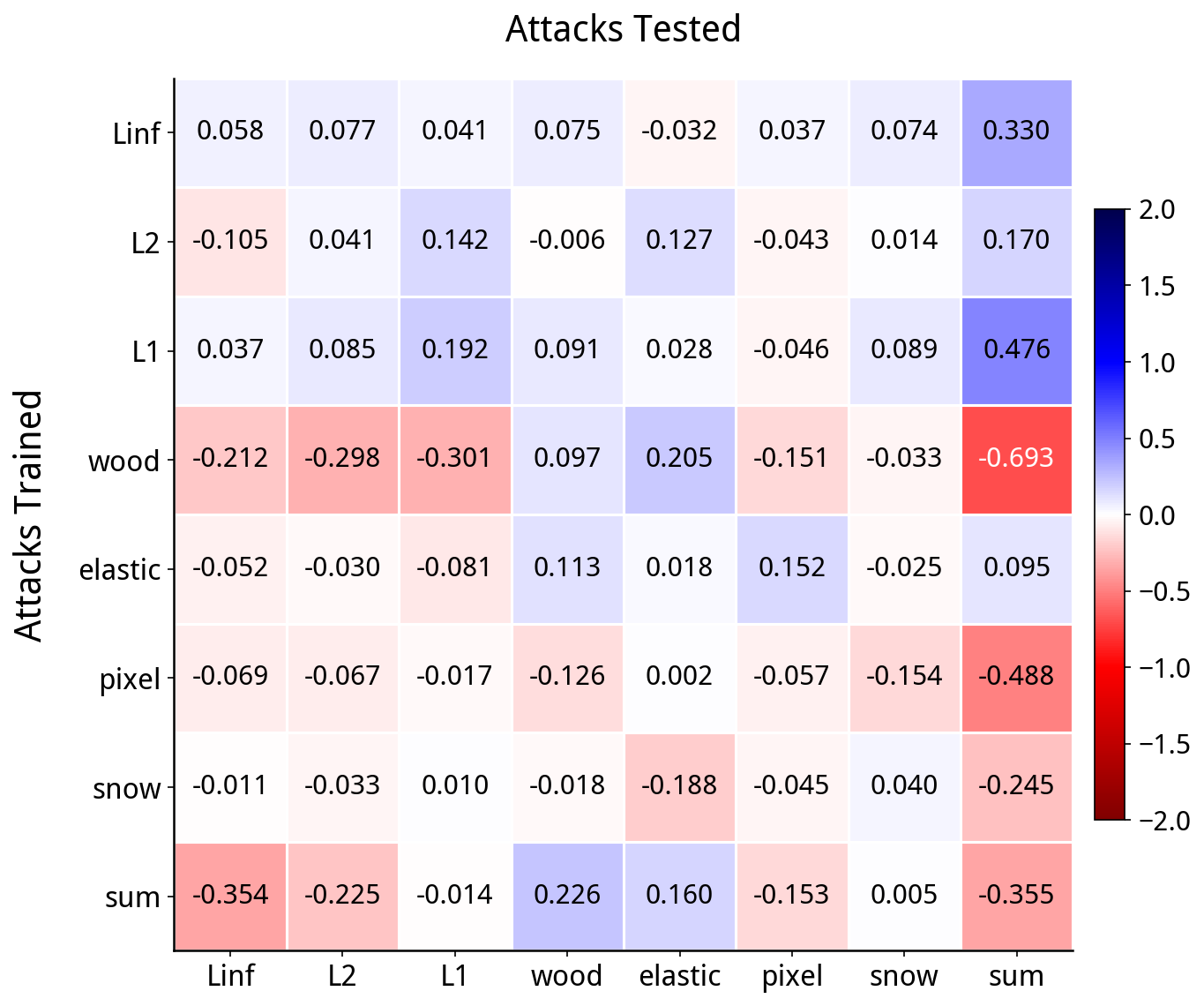}
    \vspace{-10pt}
    \caption{Trade-off matrix visualization. Each entry represents the change in robust accuracy of a specific attack type (shown along the top) after 3 epochs of sequential fine-tuning against designated attack types (shown on the left).}
    \vspace{-10pt}
    \label{fig:tradeoff}
    \Description{tradeoff}
\end{figure}

\noindent\textbf{Semantic-$\ell_p$ Conflicts}. As shown in Figure~\ref{fig:tradeoff}, semantic attacks often degrade $\ell_p$ robustness because most natural corruptions are incompatible with $\ell_p$ perturbation constraints.

\noindent\textbf{Transfer Asymmetry}. The tradeoff matrix shows no symmetric pattern with respect to the diagonal. For instance, AT against $\ell_p-$attack enhances robustness against most semantic attacks, while AT using most semantic attacks tends to impair $\ell_p$ robustness.

\noindent\textbf{Robustness Interference}. When summing all values in the trade-off matrix, it yields a negative total of \textbf{-0.355}. Furthermore, the marginal gain in robustness from pixel adversarial training itself is negative, indicating that the prior parameters may be far away from the relatively stable region for optimizing the robustness loss.

\subsection{Computational Analysis of Sequential Fine-Tuning Failure}

The results of the preliminary experiment, where the sum of robust accuracies decreased after fine-tuning against individual adversarial attacks, have already demonstrated that such a simple sequential strategy is infeasible for multi-robustness fine-tuning. It is therefore necessary to adopt alternative strategies to mitigate these \emph{cross-type trade-offs} and prevent \emph{catastrophic forgetting} of previously trained attacks during fine-tuning. Before that, however, we first explore the theoretical reasons behind the failure of sequential fine-tuning to better understand this trade-off.

\subsubsection{Theoretical Formulation}
Let $\mathcal{S}_p$ and $\mathcal{S}_q$ denote two distinct attack types, each characterized by its perturbation range $\Delta_i$ for $i\in\{p,q\}$. The robust risk for each attack $i\in\{p,q\}$ is defined as: 
\begin{equation} \mathcal{R}_i(\theta)=\mathbb{E}_{(x,y)\sim\mathcal{D}}\!\left[\max_{\delta\in\Delta_i}\mathcal{L}(f_\theta(x+\delta),y)\right],
\end{equation}
where $\mathcal{L}$ is the loss function, and $\mathcal{D}$ denotes the clean data distribution.
We then define the average robust risk as:

\begin{equation}
\mathcal{R}_{avg}(\theta)=\tfrac{1}{2}\big(\mathcal{R}_p(\theta)+\mathcal{R}_q(\theta)\big)
\end{equation}

\subsubsection{Main Results}
\begin{lemma}[Average robust risk bound]
\label{lemma}
The change in average robust risk $\Delta\mathcal{R}_{avg}$ can be bounded as:
\begin{equation}
\Delta\mathcal{R}_{avg} \lesssim -\|\nabla\mathcal{R}_{avg}(\theta_1)\| \|\Delta\theta\| \cos\psi + \tfrac{1}{2}\lambda_{\max}(H_{avg}) \|\Delta\theta\|^2
\end{equation}
where $\psi$ denotes the angle between $\nabla\mathcal{R}_{avg}(\theta_1)$ and $\nabla\mathcal{R}_{q}(\theta_1)$, $H_{avg}$ is the local Hessian of $\mathcal{R}_{avg}$, $\lambda_{\max}(H_{avg})$ denotes its largest eigenvalue of the Hessian at $\theta_1$.
\end{lemma}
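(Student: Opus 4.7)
The plan is to perform a second-order Taylor expansion of $\mathcal{R}_{avg}$ around $\theta_1$, identify the update $\Delta\theta$ with a gradient step on the new attack's risk $\mathcal{R}_q$ (the natural direction taken when sequentially fine-tuning on attack type $q$ after attack $p$), and then bound the quadratic term by the top eigenvalue of the Hessian. The ``$\lesssim$'' in the statement is precisely what allows us to absorb the higher-order remainder in the expansion.

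First I would write
\[
\mathcal{R}_{avg}(\theta_1+\Delta\theta)=\mathcal{R}_{avg}(\theta_1)+\langle\nabla\mathcal{R}_{avg}(\theta_1),\Delta\theta\rangle+\tfrac{1}{2}\Delta\theta^{\top}H_{avg}\Delta\theta+o(\|\Delta\theta\|^2),
\]
so that $\Delta\mathcal{R}_{avg}$ equals the linear and quadratic terms up to a third-order remainder. The quadratic term is dispatched immediately by the Rayleigh quotient inequality $\Delta\theta^{\top}H_{avg}\Delta\theta\le\lambda_{\max}(H_{avg})\|\Delta\theta\|^2$, which yields the second summand of the stated bound.

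For the linear term I would invoke the sequential fine-tuning setup explicitly: the update is $\Delta\theta=-\eta\nabla\mathcal{R}_q(\theta_1)$, so $\|\Delta\theta\|=\eta\|\nabla\mathcal{R}_q(\theta_1)\|$, and, using the definition of the angle $\psi$ between $\nabla\mathcal{R}_{avg}(\theta_1)$ and $\nabla\mathcal{R}_q(\theta_1)$,
\[
\langle\nabla\mathcal{R}_{avg}(\theta_1),\Delta\theta\rangle=-\eta\|\nabla\mathcal{R}_{avg}(\theta_1)\|\|\nabla\mathcal{R}_q(\theta_1)\|\cos\psi=-\|\nabla\mathcal{R}_{avg}(\theta_1)\|\|\Delta\theta\|\cos\psi.
\]
Combining this identity with the Hessian bound produces the claimed inequality. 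The sign structure is then the punchline: the linear term is negative exactly when $\cos\psi>0$, i.e.\ when the two gradients are aligned, and becomes a \emph{positive} contribution (a harmful increase in average risk) as soon as the gradients conflict.

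The main obstacle is not the calculation itself but two implicit modeling points that need to be justified cleanly. First, $\Delta\theta$ must be (at least approximately) aligned with $-\nabla\mathcal{R}_q(\theta_1)$; this is immediate for a single gradient step on the outer objective but requires a Danskin-style envelope argument if the fine-tuning step involves an inner adversarial maximization over $\delta\in\Delta_q$. Second, the ``$\lesssim$'' hides a third-order remainder, so I would explicitly assume $\mathcal{R}_{avg}\in C^{3}$ in a neighborhood of $\theta_1$ and that $\|\Delta\theta\|$ is small enough (i.e.\ the learning rate $\eta$ is sufficiently small) for the $o(\|\Delta\theta\|^2)$ term to be dominated by the quadratic one.
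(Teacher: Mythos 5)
Your proof is correct and follows essentially the same route as the paper: a second-order Taylor expansion of $\mathcal{R}_{avg}$ at $\theta_1$, the spectral bound $\Delta\theta^\top H_{avg}\Delta\theta \le \lambda_{\max}(H_{avg})\|\Delta\theta\|^2$ on the quadratic term, and the identification $\Delta\theta = -\eta\nabla\mathcal{R}_q(\theta_1)$ to rewrite the linear term as $-\|\nabla\mathcal{R}_{avg}(\theta_1)\|\,\|\Delta\theta\|\cos\psi$. The two caveats you flag (the Danskin-type envelope argument for differentiating through the inner maximization, and the small-step absorption of the higher-order remainder into ``$\lesssim$'') are exactly the regularity assumptions the paper states in its appendix before the proof.
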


Based on the preceding lemma, we can derive a safe parameter-drift threshold to prevent the average robust risk from increasing during fine-tuning
\begin{proposition}[Safe parameter-drift threshold]
\label{prop}
Let \\$\lambda_{\max}(H_{avg})$ denote the largest eigenvalue of the local Hessian of $\mathcal{R}_{avg}$ at $\theta_1$. A sufficient condition to prevent average robustness degradation after the sequential fine-tuning in the order of $S_p$ followed by $S_q$ is:
\begin{equation}
\label{eq:critical_drift}
\|\Delta\theta\| < \frac{2 \|\nabla\mathcal{R}_{avg}(\theta_1)\| \cos\psi}{\lambda_{\max}(H_{avg})}, \quad \psi < \frac{\pi}{2}
\end{equation}
\end{proposition}

This reveals that degradation arises either from weak gradient alignment ($\psi$ is relatively large, making $\cos\psi$ small), excessive drift $\Delta\theta$, or from large curvature $\lambda_{\max}(H_{avg})$. The detailed assumptions and proofs are provided in Appendix~B.

\subsubsection{Empirical Understanding}
Based on Proposition~\ref{prop}, we identify \emph{parameter drift} as the primary mechanism behind the failure of sequential fine-tuning. A sufficiently large displacement in parameter space can push the model out of regions robust to previous attack types, leading to \emph{catastrophic forgetting}. This parameter drift arises mainly from \emph{gradient conflicts} among different adversarial objectives, which fundamentally stem from distinct constraint structures of these attacks. Moreover, excessive optimization on a single direction further amplifies such drift. 

These observations motivate the design of our Calibrated Adversarial Sampling (CAS) method.
CAS employs high-frequency random sampling to counteract parameter drift and explicitly incorporates the cross-type trade-off among multiple robustness objectives.
This enables it to optimize parameter updates along safer directions and maintain balanced robustness across attacks.
Furthermore, inspired by multi-objective optimization theory, we introduce the cross-type trade-off structure into the reward function to guide updates along the \emph{Pareto frontier}, balancing exploration and exploitation, as detailed in Appendix~C.


\section{Methodology}
\label{sec:methodology}

We cast the problem of fine-tuning for multi-type robustness as a complex variant of the multi-armed bandit problem. By designing a dynamic and fair reward function and balancing exploration with exploitation, we transform the complex problem of jointly optimizing multiple robustness objectives into an interpretable and extensible framework, named Calibrated Adversarial Sampling (CAS), as illustrated in Figure~\ref{fig:pipeline}.

\subsection{Problem Formulation}
\label{subsec:problem}

Consider a classification model $f_\theta$ trained on data distribution $\mathcal{D}$, and a set of $M$ adversarial perturbation types $\mathcal{P} = \{ p_1, p_2, \dots, p_M \}$. Each $p_k$ corresponds to an adversarial attack type (e.g., $\ell_\infty$, blur) and has an importance weight $w_k$. At each training iteration $t$, we select a perturbation type $p_{a_t}$, generate adversarial examples under this type, and update model parameters $\theta_t \mapsto \theta_{t+1}$ accordingly. 

We view the choice of $p_{a_t}$ as pulling arm $a_t$ in a multi-armed bandit. The ``reward'' quantifies the overall improvement in multi-type robustness achieved by training on this perturbation type. However, unlike standard bandits, the underlying reward process here is: (a) \textbf{Non-stationary:} The robustness improvement achieved by each adversarial training step is not fixed, and can only be dynamically evaluated through the loss and robust accuracy on the corresponding adversarial examples; (b) \textbf{Partially observable:} Adversarial training against one perturbation can simultaneously increase or decrease robustness against other perturbations. Since the real-time impact against other perturbation types cannot be directly observed, these dynamics are thus better modeled as a partially observable Markov process.

Formally, let $L_m^{(t)}$ denote the adversarial loss under perturbation $p_m$ at iteration $t$, with the observed feedback given by the loss sequence $(L_{k_0}^{(0)}, L_{k_1}^{(1)}\ldots, L_{k_t}^{(t)})$. Since the evolution of multi-type robustness is unknown and non-stationary, designing a comprehensive and dynamic reward function is challenging yet crucial.

\subsection{Reward Design}
\label{subsec:reward}

At each iteration, the reward for selecting perturbation $p_v$ is composed of two parts:
\begin{equation}
    R_v = R_v^{\text{self}} + R_v^{\text{tradeoff}}
\end{equation}
where $R_v^{\text{self}}$ captures the marginal gain in robustness under $p_v$ itself, and $R_v^{\text{tradeoff}}$ measures its effect on other perturbation types.

\paragraph{Marginal Robustness Gain.}
To estimate the marginal effect of training against the $p_v$ attack, we examine the trajectory of $\log L_v$ over the most recent iterations in which $p_v$ is actually applied. Specifically, we record the indices of the latest at most $W$ such iterations up to step $t$, denoted by $\mathcal{T}_v^{(t)} = \{t_1, \dots, t_m\}$ with $m \le W$, and compute the slope of the linear regression line of the corresponding log-transformed losses with respect to the iteration indices. The negative slope is then used to represent the marginal gain:
\begin{equation}
R_v^{\text{self}} \;=\;
\begin{cases}
0, & m = 0, 1 \\[6pt]
- \left(
\frac{
\sum_{j=1}^{m} (t_j - \bar{t}) \big( \log L_v^{(t_j)} - \overline{\log L_v} \big)
}{
\sum_{j=1}^{m} (t_j - \bar{t})^2
}
\right) \cdot w_v, & m \ge 2
\end{cases}
\label{eq:reward_self}
\end{equation}
where $\bar{t} = \frac{1}{m} \sum_{j=1}^{m} t_j$ and 
$\overline{\log L_v} = \frac{1}{m} \sum_{j=1}^{m} \log L_v^{(t_j)}$.
Applying the logarithm promotes fairness across different robustness by normalizing loss magnitudes, which can vary significantly between attacks, thus reducing sensitivity to the perturbation strength $\epsilon_k$. 
The sliding window smooths noise while capturing recent trends, and a steeper negative slope indicates a faster robustness improvement.

\paragraph{Cross-Type Trade-off.}
Training on $p_v$ can influence losses under other perturbations $p_k$ ($k \neq v$), potentially yielding either improvements or degradation due to conflicting constraints. For each $p_k$, let $t_{\text{prev}}^k$ and $t_{\text{curr}}^k$ denote its two most recent occurrences (if they exist). We count how many times each perturbation is selected between these two steps, and denote by $n_{j,k}$ the number of times $p_j$ is applied in this interval. The trade-off reward is defined as:
\begin{equation}
R_v^{\text{tradeoff}} =
\sum^M_{\substack{
k=1,k \neq v \\
\exists t_{\text{prev}}^k ,\;
\exists t_{\text{curr}}^k
}}
w_k \cdot
\frac{n_{v,k}}{\sum_{j=1}^M n_{j,k}} \cdot
\log \frac{L_k^{(t_{\text{prev}}^k)}}{L_k^{(t_{\text{curr}}^k)}}.
\label{eq:reward_tradeoff}
\end{equation}
This formulation attributes each change in $L_k$ ($k\neq v$) to the $p_v$ perturbations applied in the interval proportionally, rewarding positive cross-type transfer and penalizing degradation, while the logarithmic ratio ensures scale invariance.

\paragraph{Hybrid Loss.}
To balance clean accuracy and robustness, we adopt a TRADES-style~\cite{zhang2019theoretically} hybrid loss:
\begin{equation}
\mathcal{L}_{\mathrm{total}} 
= \beta \, \mathcal{L}\big( f_\theta(x^{\mathrm{adv}}), y \big) 
+ (1-\beta) \, \mathcal{L}\big( f_\theta(x), y \big)
\label{eq:trades}
\end{equation}
where $x^{\mathrm{adv}}$ is the adversarial example generated under the selected attack, and $\beta$ is a weighting hyperparameter. This formulation prevents catastrophic degradation of clean accuracy during AT.

\subsection{Balancing Exploration and Exploitation}
\label{subsec:ucb}

To effectively fine-tune across multiple perturbation types,
it is essential to balance \textit{exploitation}, focusing on high-reward perturbations with \textit{exploration} of under-sampled ones to ensure accurate update of rewards for all perturbation types. To achieve this, we utilize the Upper Confidence Bound (UCB) algorithm from multi-armed bandit theory in our sampling strategy.

Let $N_v$ denote the number of times that perturbation $p_v$ has been selected (initialized as $N_v=1$). The final selection score is:
\begin{equation}
\tilde{R}_v \;=\; \exp(\alpha \cdot R_v) \;+\; \sqrt{\frac{2\log (\sum_{j=1}^M N_j)}{N_v}},
\label{eq:ucb}
\end{equation}
where $\alpha$ is a hyperparameter that controls the scaling of rewards.
The exponential term ensures non-negativity and enables smooth weighting under the softmax sampling scheme, while the second term encourages exploration by increasing the score for rarely-sampled perturbations.
As training progresses and $N$ increases, 
the exploration term gradually diminishes in magnitude relative to $N_v$, naturally shifting the selection strategy from exploration toward exploitation. 
Sampling probabilities are then defined as:
\begin{equation}
\pi_v \;=\; \frac{\tilde{R}_v \cdot w_v}{\sum_{j=1}^M \tilde{R}_j \cdot w_j},
\label{eq:sampling}
\end{equation}
where $w_v$ denotes an optional user-specified importance weight. 
At each iteration, perturbation types are chosen according to $\pi_v$ . This strategy allows CAS to adaptively allocate adversarial samples towards those with higher overall robustness gains while ensuring sufficient exploration to maintain fairer coverage and mitigate catastrophic forgetting of previously trained types.

\begin{algorithm}
\caption{Calibrated Adversarial Sampling (CAS)}
\label{alg:cas}
\begin{algorithmic}[1]
\REQUIRE Pre-trained model parameters $\theta$, perturbations $\mathcal{P} = \{p_1, \dots, p_M\}$ with weights $w_1, \dots, w_M$, sliding window size $W$, hyperparameter $\alpha$ and $\beta$, total iterations $T$
\ENSURE Fine-tuned model parameters $\theta_T$
\STATE Initialize $N_v \gets 0$ for all $v \in [M]$, loss history buffers
\FOR{$t = 1$ \TO $T$}
    \FOR{$v = 1$ \TO $M$}
        \STATE Compute $R_v^{\text{self}}$ using loss buffers (Eq.~\ref{eq:reward_self})
        \STATE Compute $R_v^{\text{tradeoff}}$ using cross-type records (Eq.~\ref{eq:reward_tradeoff})
        \STATE $R_v \gets R_v^{\text{self}} + R_v^{\text{tradeoff}}$ 
        \STATE $\tilde{R}_v \gets \exp(\alpha \cdot R_v) + \sqrt{\frac{2\log (\sum_{j=1}^M N_j)}{N_v}}$ 
    \ENDFOR
    \STATE Sample $a_t \sim \pi$ where $\pi_v = \frac{\tilde{R}_v \cdot w_v}{\sum_j \tilde{R}_j \cdot w_j}$ 
    \STATE $N_{a_t} \gets N_{a_t} + 1$
    \STATE Generate $x^{\mathrm{adv}}$ under perturbation $p_{a_t}$
    \STATE $\mathcal{L}_{\mathrm{total}} \gets \beta \mathcal{L}(f_\theta(x^{\mathrm{adv}}), y) + (1-\beta)\mathcal{L}(f_\theta(x), y)$
    \STATE Update $\theta_t$ via $\nabla_\theta \mathcal{L}_{\mathrm{total}}$
    \STATE Update $L_{a_t}^t$ and cross-type records
\ENDFOR
\end{algorithmic}
\end{algorithm}

\section{Experiment}
\begin{table*}[t]
\centering
\caption{Overall comparison of our CAS method with baselines on CIFAR-10, CIFAR-100 and SVHN datasets. The “Original” model is obtained via $\ell_p$ pre-training on each dataset, and “Order” denotes sequential fine-tuning by cycling through attack types in a fixed order. \textbf{avg. $\ell_p$} denotes the average robust accuracy over $\ell_\infty$, $\ell_2$, and $\ell_1$, while \textbf{avg. Corruption} denotes the average robust accuracy against snow, fog, and blur attacks. All results are reported as mean $\pm$ standard deviation over five independent runs. The average time cost of each method across the three datasets is reported as follows.}
\begin{tabular}{c|c|cccccc|c}
\toprule
\textbf{Dataset} & \textbf{Accuracy} & \textbf{Original} & \textbf{E-AT} & \textbf{SAT} & \textbf{Order} & \textbf{AVG} & \textbf{PCGrad} & \textbf{CAS (ours)} \\
\midrule
\multirow{4}{*}{CIFAR-10}&Clean          & 82.90$\pm$0.00 & 83.56$\pm$0.31  & 84.62$\pm$0.42 & 82.92$\pm$0.17 & 84.56$\pm$0.24 & 84.44$\pm$0.21 & \textbf{85.26$\pm$0.33} \\
&avg. Robust           & 33.51$\pm$0.00 & 51.50$\pm$0.33 & 51.41$\pm$0.40 & 51.23$\pm$0.15 & 51.46$\pm$0.23 & \textbf{52.07$\pm$0.15} & 51.79$\pm$0.35 \\
&avg. $\ell_p$  & 31.13$\pm$0.00 & 52.86$\pm$0.60 & 53.81$\pm$0.54 & \textbf{53.95$\pm$0.18} & 53.47$\pm$0.31 & 53.78$\pm$0.32 & 52.91$\pm$0.54 \\
&avg. Semantic      & 35.88$\pm$0.00 & 50.14$\pm$0.31 & 49.00$\pm$0.27 & 48.51$\pm$0.13 & 49.46$\pm$0.18 & 50.36$\pm$0.21 & \textbf{50.67$\pm$0.22} \\
\midrule
\multirow{4}{*}{CIFAR-100}&Clean          & 48.00$\pm$0.00 & 57.40$\pm$0.39  & 57.92$\pm$0.50 & 56.94$\pm$0.31 & 57.78$\pm$0.30 & 57.86$\pm$0.20 & \textbf{58.54$\pm$0.48} \\
&avg. Robust           & 16.61$\pm$0.00 & 27.62$\pm$0.48 & 27.53$\pm$0.46 & 27.12$\pm$0.30 & 27.43$\pm$0.32 & \textbf{28.14$\pm$0.16} & 27.99$\pm$0.37 \\
&avg. $\ell_p$  & 19.03$\pm$0.00 & 28.60$\pm$0.67 & 28.69$\pm$0.50 & 28.75$\pm$0.29 & 28.56$\pm$0.37 &\textbf{29.87$\pm$0.23} & 29.23$\pm$0.46 \\
&avg. Semantic      & 14.19$\pm$0.00 & 26.65$\pm$0.43 & 26.36$\pm$0.46 & 25.48$\pm$0.33 & 26.29$\pm$0.32 & 26.46$\pm$0.18 & \textbf{26.75$\pm$0.39} \\
\midrule
\multirow{4}{*}{SVHN}&Clean          & 80.40$\pm$0.00 & 92.06$\pm$0.46  & 93.32$\pm$0.66 & 92.82$\pm$0.35 & 93.02$\pm$0.39 & 93.22$\pm$0.31 &\textbf{93.86$\pm$0.54} \\
&avg. Robust           & 40.20$\pm$0.00 & 56.32$\pm$0.71 & 55.91$\pm$0.73 & 53.03$\pm$0.49 & 56.33$\pm$0.41 & 57.58$\pm$0.36 &\textbf{57.76$\pm$0.67} \\
&avg. Corruption  & 41.67$\pm$0.00 & 62.86$\pm$0.67 & 63.62$\pm$0.52 & 60.63$\pm$0.66 & 63.51$\pm$0.44 & \textbf{64.31$\pm$0.35} &63.81$\pm$0.48 \\
&avg. Others      & 38.74$\pm$0.00 & 49.78$\pm$1.05 & 48.21$\pm$1.05 & 45.43$\pm$0.75 & 49.15$\pm$0.58 & 50.84$\pm$0.55 & \textbf{51.71$\pm$0.96} \\
\midrule
 - & \textbf{Time (s)}& 0.0 & 2390.7 & 2386.6 & 2375.4 & 47241.1 & 47824.4 &2408.0\\
\bottomrule
\end{tabular}
\label{main experiment}
\end{table*}
In this section, we demonstrate the effectiveness of our proposed CAS framework in improving overall robustness and addressing the multi-robust trade-off.
\subsection{Experimental Setup}
We conduct our experiments on the benchmark datasets CIFAR-10, CIFAR-100~\cite{krizhevsky2009learning} and SVHN~\cite{netzer2011reading} using pre-trained PreActResNet-18~\cite{he2016identity} models provided by~\cite{croce2022adversarial} and~\cite{rice2020overfitting}. During fine-tuning, we consider the 21 different adversarial attacks  presented in Section~\ref{subsec: Unforeseen Adversarial Attacks}. 

\noindent \textbf{Baselines}. We compare our CAS method with two representative high-frequency random fine-tuning baselines: SAT~\cite{madaan2021learning} and E-AT~\cite{croce2022adversarial}. To examine whether the order of adversarial attacks affects performance, we also include a simple sequential fine-tuning scheme of our own design, which cycles through attack types in a fixed, weight-based order. Additionally, we add the well-known AVG method~\cite{tramer2019adversarial} and PCGrad~\cite{yu2020gradient} for the comparison of multi-task optimization methods. Detailed descriptions of these methods are provided in Section~\ref{sec:related}.

\noindent \textbf{Training Settings}. Following the common practice of AT~\cite{pang2020bag,wang2022self,wei2023cfa}, we fine-tune a pretrained PreActResNet-18 model using SGD with momentum 0.9, weight decay $5 \times 10^{-4}$, and initial learning rate 0.1 for 10 epochs. The hyperparameters are set to $\alpha = 10$ and $\beta = \frac89$. On CIFAR-10 the pretrained model has been adversarially trained under the $\ell_\infty$ norm, while on CIFAR-100 and SVHN, the models have been adversarially trained under $\ell_2$. We limit the number of fine-tuning epochs as prior work~\cite{croce2022adversarial} has shown that quick fine-tuning can significantly improve multi-robustness, which is also corroborated by our own ablation study. 

All $\ell_p$ attacks are conducted using default perturbation margin $\epsilon_\infty= \frac{8}{255}$, $\epsilon_2=0.5$, and $\epsilon_1=12$. For semantic attacks, we compute a calibrated margin to ensure that most perturbations yield robust accuracies between 20\% and 60\% on the pre-trained CIFAR-10 model:
$\epsilon_{k}=(\lambda_k+acc_{adv}[k]) \times \epsilon$,
where $\epsilon$ is the original perturbation margin and $\lambda_k$ is a hyperparameter controlling the calibration. We repeat this process until $acc_{adv}[k]$ falls within the desired range.

\noindent \textbf{Evaluation Metrics}. We report the mean and standard deviation of both clean and robust accuracy over five independent runs. Robust accuracy is evaluated by AutoAttack~\cite{croce2020reliable}, a widely-used robustness evaluation benchmark.

\noindent \textbf{Real-World Scenario Simulation}. On CIFAR-10 and CIFAR-100, we assign weight 6 to each $\ell_p$ attack, and weight 1 to the 18 semantic attacks. This emulates a scenario where attackers can precisely manipulate inputs by adding norm-constrained perturbations. For the street-view SVHN dataset, we assign weight 6 to common corruptions (snow, blur, and fog) and weight 1 to the remaining 18 attacks, simulating an autonomous driving context where robustness to natural corruptions is critical. The average robust accuracy is the weighted sum using these scenario-specific weights.

\subsection{Main Results}

\noindent\textbf{Superior Holistic Accuracy.} As shown in Table~\ref{main experiment}, our CAS method achieves the highest clean accuracy and the first or second highest average robust accuracy on all datasets. It consistently delivers strong performance on CIFAR-10, CIFAR-100, and SVHN, while maintaining balanced robustness between high-weighted and other perturbation types. Notably, on SVHN, CAS surpasses all baselines in avg. Robust, demonstrating its potential for real-world scenarios such as autonomous driving. These improvements mainly stem from our reward design that explicitly accounts for robustness trade-offs, together with the UCB-based exploration–exploitation mechanism. The effectiveness of these components is further validated through ablation studies in Appendix~E.

\noindent\textbf{Analysis of Baselines.} Each baseline exhibits distinct characteristics. E-AT adaptively assigns higher selection probability to attacks with lower robust accuracy, enhancing fairness across robustness types; however, this tendency to favor stronger attacks often reduces clean accuracy. The random fine-tuning method SAT and the multi-task learning approach AVG yield similar performance, as analyzed from a Pareto frontier perspective in Appendix~C. PCGrad achieves high overall robust accuracy, but is excessively time-consuming. The Order method exhibits lower variance than random methods, but also lower accuracy, suggesting that a fixed order of attack selection may induce overfitting to the sampling sequence.

\noindent\textbf{Time Cost Analysis.} As reported in Table~\ref{main experiment}, our CAS method improves final accuracy with a time cost comparable to other baseline methods while outperforming the AVG and PCGrad methods. The time cost of AVG and PCGrad grows linearly with the number of adversarial attack types, revealing an inherent limitation of multi-task learning methods in scalability. In contrast, our CAS method significantly reduces time costs by using a sliding window to dynamically record the marginal gains and interaction effects of different adversarial attacks in real time.

\subsection{Ablation and Sensitivity Analysis}
\begin{figure}
    \centering
    \includegraphics[width=1.0\linewidth]{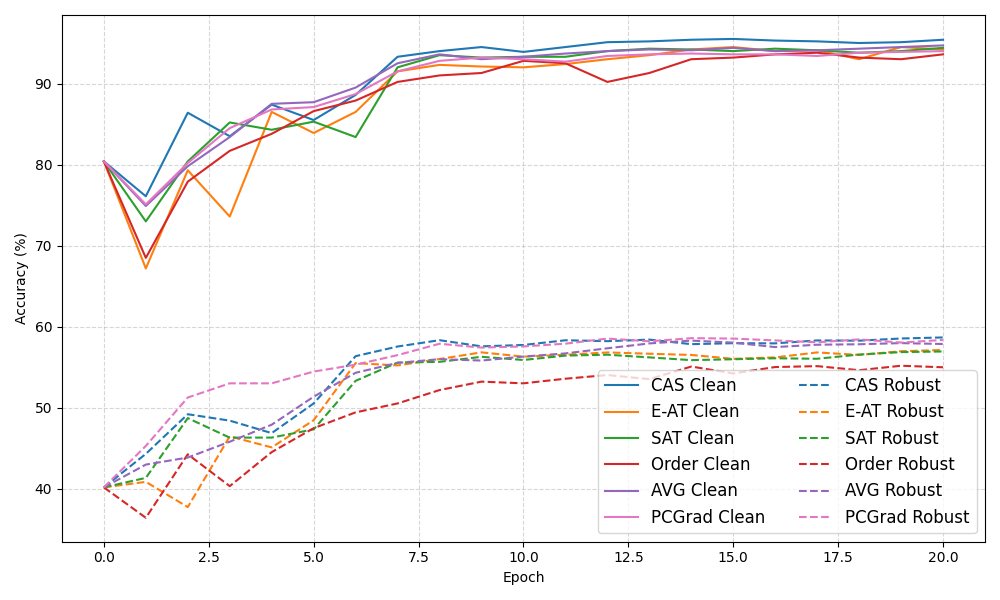}
    \caption{Ablation study examining the effect of training epochs using SVHN. Robust accuracy is the weighted average.}
    \label{fig:epoch}
    \Description{epoch}
\end{figure}
\noindent \textbf{Epochs.} The analysis results on training epochs (Figure~\ref{fig:epoch}) validate our choice of 10 epochs for the main experiments. During initial training (epochs 0-6), all methods exhibit significant instability in accuracy.
This aligns with prior findings that, in the early stages of adversarial training, the loss surface has not yet been smoothed, and optimization conflicts occur while the model learns robust features~\cite{zhang2019theoretically,liu2020loss,zhang2022revisiting}. Beyond epoch 10, clean accuracy stabilizes and robust accuracy shows only marginal fluctuations with diminishing returns. Our CAS method demonstrates stable superiority in this regime, maintaining robustness advantages without compromising clean performance. 
Therefore, choosing an epoch of 10 in our experiments achieves a good balance between performance and efficiency, avoiding unnecessary computational and overfitting risk while achieving strong multi-robustness.

\noindent \textbf{Number of Considered Perturbations.}
\begin{figure}
    \centering
    \includegraphics[width=1.0\linewidth]{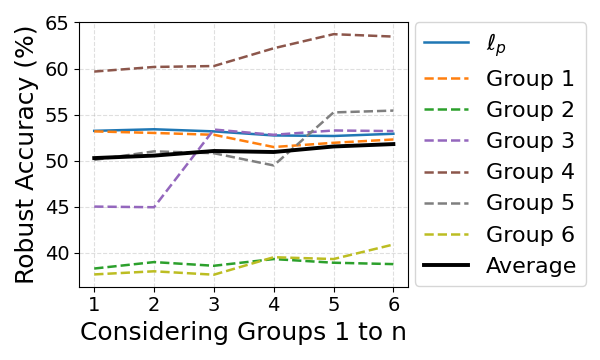}
    \caption{Ablation study on the number of considered perturbations using CIFAR-10. The 18 semantic attacks, (Wood, Elastic, Pixel), (Snow, Gabor, JPEG), (Glitch, Kaleidoscope, Blur), (Edge, Fog, Texture), (Prison, Whirlpool, Polkadot), (Klotski, Hsv and PPGD) are divided into Group 1 to 6 in order from left to right. We consider the attacks from Group 1 to Group n one by one and assign a weight of 1 to each attack, while each $\ell_p$ ($p=1,2,\infty$) attack is always included and assigned a weight of 6. We report the weighted average robust accuracy within each group and across all attacks.}
    \label{fig:number}
    \Description{number}
\end{figure}
As shown in Figure~\ref{fig:number}, incorporating more perturbation types during fine-tuning steadily improves the overall average robust accuracy, while the $\ell_p$ robustness shows a slight but stable decline. The robustness trends of different semantic attack groups also vary: for instance, the robustness of Group 3 and Group 5 show clear improvement when they are included in fine-tuning, whereas Group 1 and Group 5 experience minor degradation when Group 4 is introduced, indicating a negative trade-off caused by that group. In contrast, Group 2 appears insensitive to the inclusion of other attacks. 
Overall, expanding the range of perturbation types in CAS fine-tuning yields a  stable improvement in average robustness, with only limited negative effects on specific robustness types. These observations suggest that, in practice, introducing a broader variety of adversarial attacks during fine-tuning enhances general robustness.

\noindent \textbf{Bias between Exploration and Exploitation.} According to Eq.~\ref{eq:ucb}, a larger $\alpha$ makes the CAS method more inclined toward exploitation, while a smaller $\alpha$ encourages exploration. As shown in Table~\ref{tab: alpha}, gradually increasing $\alpha$ from 1 to 10 improves both clean accuracy and robustness. This is because a moderate increase in $\alpha$ enables CAS to better utilize reward information for optimal sampling allocation, thereby mitigating negative trade-offs. However, since $\alpha$ affects the reward exponentially in Eq.~\ref{eq:ucb}, excessively large $\alpha$ values cause CAS to overemphasize high-reward adversarial perturbations, leading to degradation in multi-robustness and increasing the risk of catastrophic forgetting. Empirically, setting $\alpha$ between 10 and 20 achieves a good balance between exploration and exploitation.

\noindent\textbf{Additional analysis}. We provide
more comprehensive ablation studies on more datasets and each key component in Appendix~E, where the results are consistent with claims in this section.


\begin{table}[t]
\centering
\caption{Sensitivity analysis of the hyperparameter $\alpha$, which controls the balance between exploration and exploitation in CAS, conducted on the CIFAR-10 dataset. Results are averaged over five independent runs.}
\begin{tabular}{c|cc|c|cc}
\toprule
\textbf{$\alpha$} & \textbf{1} & \textbf{5} & \textbf{10} & \textbf{20} & \textbf{50} \\
\midrule
Clean         & 84.32 & 84.68 & 85.26 & \textbf{85.42} & 85.04 \\
avg. Robust   & 51.68 & 51.50 & \textbf{51.79} & 51.55 & 50.83 \\
avg. $\ell_p$ & 53.10 & \textbf{53.27} & 52.91 & 52.83 & 50.60 \\
avg. Semantic & 50.26 & 49.73 & 50.67 & 50.26 & \textbf{51.05} \\
\bottomrule
\end{tabular}
\label{tab: alpha}
\end{table}
\section{Convergence Analysis of Randomized Multi-Attack Training}

\subsection{Problem Setup and Assumptions}
In this section, we establish the theoretical foundations for the convergence of the randomized multi-attack training methods including CAS under standard optimization assumptions.

Consider a set of $K$ adversarial attack types $\mathcal{A} = \{a_1, \dots, a_K\}$. For each attack $i$, let $\mathcal{L}_i(\theta)$ be its loss function. In the CAS method, at each iteration $t$, we select an attack $i_t$ following a dynamic probability distribution $\pi_t = \{p_{1,t}, \dots, p_{K,t}\}$. The update rule is:
\begin{equation}
    \theta_{t+1} = \theta_t - \eta_t \nabla \mathcal{L}_{i_t}(\theta_t)
\end{equation}
To ensure the algorithm optimizes a meaningful objective, we assume the time-averaged sampling distribution converges to a stationary distribution $\pi^* = \{p_1^*, \dots, p_K^*\}$: $\forall i \in \{1, \dots, K\} , $
$\lim_{T \to \infty} \frac{1}{T} \sum_{t=1}^T p_{i,t} = p_i^*.$
This implies the sequence of updates asymptotically minimizes the aggregate risk $\mathcal{L}^*(\theta) = \sum_{i=1}^K p_i^* \mathcal{L}_i(\theta)$. Let $\theta^*$ be the unique minimizer of $\mathcal{L}^*(\theta)$.

\begin{assumption}[$\mu-$Strong Convexity]
    \label{assum:strong_convex}
    There exists $\mu > 0$ such that for any distribution $\pi_t$ in the sequence and for all $\theta$, the instantaneous expected gradient $\nabla \mathcal{L}_t(\theta) = \sum_{i=1}^K p_{i,t} \nabla \mathcal{L}_i(\theta)$ satisfies:
    \begin{equation}
        \langle \nabla \mathcal{L}_t(\theta), \theta - \theta^* \rangle \geq \mu \|\theta - \theta^*\|^2
    \end{equation}
\end{assumption}

\begin{assumption}[Consistent Variance Bound]
    \label{assum:variance}
    The second moment of the stochastic gradient is uniformly bounded for all $\pi_t$:
    \begin{equation}
        \mathbb{E}[\|\nabla \mathcal{L}_{i_t}(\theta_t)\|^2 \mid \mathcal{F}_t] = \sum_{i=1}^K p_{i,t} \|\nabla \mathcal{L}_i(\theta_t)\|^2 \leq G^2
    \end{equation}
\end{assumption}

\subsection{Proof of Convergence}

\begin{proof}
Let $V_t = \|\theta_t - \theta^*\|^2$ and $e_t = \mathbb{E}[V_t]$. We set the learning rate decay to $\eta_t = C/t$.

Starting from the update rule $\theta_{t+1} = \theta_t - \eta_t g_t$:
\begin{equation}
\begin{aligned}
    \|\theta_{t+1} - \theta^*\|^2 &= \|(\theta_t - \theta^*) - \eta_t g_t\|^2 \\
    &= \|\theta_t - \theta^*\|^2 - 2\eta_t \langle \theta_t - \theta^*, g_t \rangle + \eta_t^2 \|g_t\|^2
\end{aligned}
\end{equation}
Taking the conditional expectation $\mathbb{E}[\cdot \mid \mathcal{F}_t]$ and using the unbiasedness $\mathbb{E}[g_t \mid \mathcal{F}_t] = \nabla \mathcal{L}_t(\theta_t)$:
\begin{equation}
    \mathbb{E}[V_{t+1} \mid \mathcal{F}_t] = V_t - 2\eta_t \langle \theta_t - \theta^*, \nabla \mathcal{L}_t(\theta_t) \rangle + \eta_t^2 \mathbb{E}[\|g_t\|^2 \mid \mathcal{F}_t]
\end{equation}
Applying Assumptions \ref{assum:strong_convex} and \ref{assum:variance}:
\begin{equation}
    \mathbb{E}[V_{t+1} \mid \mathcal{F}_t] \leq V_t - 2\eta_t \mu V_t + \eta_t^2 G^2 = (1 - 2\mu \eta_t) V_t + \eta_t^2 G^2
\end{equation}
Using $e_{t+1} = \mathbb{E}[\mathbb{E}[V_{t+1} \mid \mathcal{F}_t]]$ and $\eta_t = C/t$:
\begin{equation}
    \label{eq:master_recurrence}
    e_{t+1} \leq \left( 1 - \frac{2\mu C}{t} \right) e_t + \frac{C^2 G^2}{t^2}
\end{equation}

We prove $e_t \leq \frac{K_{bound}}{t}$ by induction. Let $K_{bound} = \max \left\{ e_1, \frac{C^2 G^2}{2\mu C - 1} \right\}$, which implies $K_{bound} \geq \frac{C^2 G^2}{2\mu C - 1}$. This choice is valid if $C > 1/2\mu$.

Base case $t=1$: $e_1 \leq K_{bound}$ is true by definition. 
Assume $e_t \leq \frac{K_{bound}}{t}$ holds. Then for $t+1$:
\begin{equation}
\begin{aligned}
    e_{t+1} &\leq \left( 1 - \frac{2\mu C}{t} \right) \frac{K_{bound}}{t} + \frac{C^2 G^2}{t^2} \\
    &= \frac{K_{bound}}{t} - \frac{2\mu C K_{bound} - C^2 G^2}{t^2}
\end{aligned}
\end{equation}
By our choice of $K_{bound}$, $-(2\mu C K_{bound} - C^2 G^2) \leq -K_{bound}$:
\begin{equation}
    e_{t+1} \leq \frac{K_{bound}}{t} - \frac{K_{bound}}{t^2} = \frac{K_{bound}(t-1)}{t^2}
\end{equation}
Since $(t-1)(t+1) < t^2$, we have $\frac{t-1}{t^2} < \frac{1}{t+1}$. Therefore:
\begin{equation}
    e_{t+1} \leq \frac{K_{bound}}{t+1}
\end{equation}
This completes the induction, showing $e_t = \mathcal{O}(1/t)$ under such parameter settings where $\eta_t = C/t$, $C > 1/2\mu$ and $K_{bound} = \max \left\{ e_1, \frac{C^2 G^2}{2\mu C - 1} \right\}$.
\end{proof}

\subsection{Discussion and Limitations}



The convergence analysis demonstrates that, the CAS optimization process converges to a local optimum with an $O(1/t)$ rate by adopting a decaying learning rate $\eta_t=C/t$ under appropriate parameter settings. Although our experiments use a fixed learning rate of $0.1$, this follows common practice for fine-tuning pre-trained models rather than training from scratch, where only a small number of epochs are required. The theoretical learning-rate schedule provides an asymptotic convergence guarantee, while the fixed learning rate is to achieve a more pronounced and faster fine-tuning effect. 

Furthermore, the proof remains valid with a time-varying sampling distribution $\pi_t$, as long as the sampled gradients satisfy the bounded variance condition $\mathbb{E}[\|\nabla\mathcal{L}_{i_t}(\theta_t)\|^2|\mathcal{F}_t]\leq G^2$ and the average sampling probability converges. We acknowledge that the strong convexity assumption is idealized for practical deep learning landscapes. Nevertheless, the Accuracy-vs-Epoch curves (Figure~\ref{fig:epoch}) suggest that adaptive attack sampling combined with appropriate optimization schedules may provide an implicit regularization effect in non-convex settings, motivating future extensions under weaker assumptions such as $L$-smoothness.
\section{Related Work}
\label{sec:related}

\subsection{Adversarial Examples}
Adversarial examples are firstly discovered as deceptive samples crafted by applying subtle (often imperceptible) perturbations to clean inputs~\cite{szegedy2013intriguing}, which can mislead DNNs into making erroneous predictions. In practice, such perturbations are typically constrained within specific norm balls (\textit{e.g.}, $\ell_p$-norm constraints), or actual perturbations under deformation, compression (\textit{e.g.}, JPEG), or natural disturbances (\textit{e.g.}, fog), as well as noise generated by algorithmically simulating such perturbations~\cite{hsiung2023towards}. To deal with this threat, \textit{adversarial training} (\textbf{AT}) has emerged as the primary defense paradigm~\cite{wong2020fast,DBLP:conf/ijcai/BaiL0WW21,shafahi2019adversarial}, which enhances model robustness by explicitly injecting adversarial examples during training.


\subsection{Multi-Robustness of DNNs}
Although adversarial training with a single attack type can effectively improve robustness against that specific type of attack, real-world scenarios often require maintaining robustness against multiple kinds of adversarial perturbations. For example, a DNN used for medical image analysis may encounter deformations, flow-like distortions, and bubble artifacts~\cite{repetto2026evaluating}. For ethical and safety considerations, it is essential to ensure that the used DNN model maintains a certain level of robustness against all these perturbations. Due to the large variety of attacks and the inherent trade-offs between different types of robustness~\cite{kang2019testing}, models trained from scratch often struggle to achieve both comprehensive robustness and fairness across attacks. In practice, fine-tuning is commonly used to efficiently obtain models that meet multiple robustness requirements. Stochastic adversarial training (SAT)~\cite{madaan2021learning} addresses this by randomly selecting attack types with fixed probabilities. E-AT~\cite{croce2022adversarial} improves upon this by dynamically adjusting the selection probabilities based on robust accuracy, focusing more on relatively weaker robustness types. Multi-objective optimization methods, such as AVG~\cite{tramer2019adversarial} and PCGrad~\cite{yu2020gradient}, compute the samples and gradients of all adversarial attacks at each step, and process these gradients (e.g., weighted averaging, projection) to determine the optimization direction. But when the number of attack types becomes large, this approach incurs significant computational overhead.

\subsection{Multi-Armed Bandit Problem}
The multi-armed bandit (MAB) problem is a classical framework for sequential decision-making under uncertainty, where an agent repeatedly selects from a set of actions (``arms’’) and receives stochastic rewards~\cite{robbins1952some}. The core challenge is to balance \emph{exploration} (gathering information about uncertain arms) and \emph{exploitation} (selecting arms believed to yield the highest rewards)~\cite{lattimore2020bandit}. Among the most widely used strategies for achieving this balance are the Upper Confidence Bound (UCB) algorithm~\cite{auer2002using} and Thompson Sampling~\cite{thompson1933likelihood,russo2018tutorial}. UCB methods construct optimistic confidence intervals around each arm’s estimated reward and select the arm with the highest upper bound. A detailed introduction and theoretical derivation of these two methods can be found in Appendix~A.
\section{Conclusion}
In this paper, we introduced Calibrated Adversarial Sampling (CAS), a novel fine-tuning framework designed to enhance the robustness of DNNs against a broad range of adversarial attacks. Through theoretical analysis and empirical evaluation on robustness conflict, we investigated the inherent cross-type trade-offs among multiple robustness dimensions and developed a dynamic reward mechanism to address them. Our framework also bridges the multi-armed bandit framework and multi-attack adversarial training, enabling stable and efficient fine-tuning for applications. Extensive experiments demonstrate that CAS achieves superior accuracies and computational efficiency across diverse attack scenarios, establishing a practical paradigm for real-world robust generalization of DNNs.

\section*{Acknowledgement}
This research was sponsored by National Natural Science Foundation of China (Grant No. 92582102, 62572013, 62172019) and Beijing Natural Science Foundation, China (Grant No. QY24035, QY26045).

\bibliographystyle{ACM-Reference-Format}
\bibliography{ref}

\newpage
\clearpage
\appendix
\setcounter{lemma}{0}
\setcounter{proposition}{0}

\section{Theoretical Foundations of MAB Algorithms}
\label{app:mab}

This section provides a concise theoretical formulation and comparative analysis of two foundational algorithms for the stochastic multi-armed bandit (MAB) problem: the Upper Confidence Bound (UCB) algorithm and Thompson Sampling.

\subsection{Problem Formulation}

We consider a stochastic multi-armed bandit with $K$ arms. When an arm $a \in \{1, \dots, K\}$ is pulled at time $t$, it yields a reward $r_t$ drawn from a fixed but unknown distribution with mean $\mu_a$. The objective of a policy $\pi$ is to maximize the cumulative reward over a horizon $T$, or equivalently, to minimize the cumulative pseudo-regret $R_T$:
\begin{equation}
R_T = T \mu^* - \mathbb{E} \left[ \sum_{t=1}^{T} \mu_{a_t} \right]
\end{equation}
where $\mu^* = \max_{a} \mu_a$ is the reward mean of the optimal arm.

\subsection{Upper Confidence Bound (UCB) Algorithm}

The UCB algorithm is a deterministic method grounded in the optimism in the face of uncertainty principle. The core idea is to maintain an optimistic estimate of each arm's reward potential and select the arm with the highest upper confidence bound.

Let $\hat{\mu}_a(t)$ be the empirical mean reward of arm $a$ after it has been pulled $n_a(t)$ times by round $t$. The Chernoff-Hoeffding inequality states that for bounded random variables $X_i \in [0,1]$:
\begin{equation}
\mathbb{P}(|\hat{\mu} - \mu| \geq \varepsilon) \leq 2 \exp(-2n\varepsilon^2)
\end{equation}
Applying this to arm $a$ after $n_a(t)$ pulls:
\begin{equation}
\mathbb{P}(|\hat{\mu}_a(t) - \mu_a| \geq \varepsilon) \leq 2 \exp(-2n_a(t)\varepsilon^2)
\end{equation}
We want to find $U_a(t)$ such that:
\begin{equation}
\mathbb{P}(\mu_a > \hat{\mu}_a(t) + U_a(t)) \leq \delta
\end{equation}
Setting the right-hand side equal to $\delta$:
\begin{equation}
\exp(-2n_a(t)U_a(t)^2) = \delta
\end{equation}
Solving for $U_a(t)$:
\begin{equation}
-2n_a(t)U_a(t)^2 = \ln \delta 
U_a(t)^2 = \frac{-\ln \delta}{2n_a(t)} 
U_a(t) = \sqrt{\frac{-\ln \delta}{2n_a(t)}}
\end{equation}
Choosing $\delta = t^{-2}$ to ensure the sum of failure probabilities converges:
\begin{equation}
U_a(t) = \sqrt{\frac{-\ln(t^{-2})}{2n_a(t)}} = \sqrt{\frac{\ln t}{n_a(t)}}
\end{equation}
The UCB algorithm uses a slightly more conservative bound:
\begin{equation}
\text{UCB}_a(t) = \hat{\mu}_a(t) + \sqrt{\frac{2\ln t}{n_a(t)}}
\end{equation}
This ensures that the probability of overestimating any arm's true mean value decreases quickly over time.

This leads to the UCB index for each arm $a$:
\begin{equation}
\text{UCB}_a(t) = \hat{\mu}_a(t) + \sqrt{\frac{2 \ln t}{n_a(t)}}
\end{equation}
The algorithm selects the arm $a_t$ at time $t$ as:
\begin{equation}
a_t = \arg\max_{a} \left[ \hat{\mu}_a(t) + \sqrt{\frac{2 \ln t}{n_a(t)}} \right]
\end{equation}

The first term, $\hat{\mu}_a(t)$, encourages exploitation favoring arms with high empirical rewards. The second term, the confidence bonus, promotes exploration of less frequently pulled arms.

\begin{algorithm}[h]
\caption{UCB Algorithm}
\label{alg:ucb}
\begin{algorithmic}[1]
\STATE Initialize: Pull each arm once
\FOR{$t = K+1, K+2, \dots, T$}
    \FOR{each arm $a = 1, \dots, K$}
        \STATE Compute $\text{UCB}_a(t) = \hat{\mu}_a(t) + \sqrt{\frac{2 \ln t}{n_a(t)}}$
    \ENDFOR
    \STATE Pull arm $a_t = \arg\max_{a} \text{UCB}_a(t)$
    \STATE Observe reward $r_t$
    \STATE Update $\hat{\mu}_{a_t}(t)$ and $n_{a_t}(t)$
\ENDFOR
\end{algorithmic}
\end{algorithm}

\subsection{Thompson Sampling Algorithm}

Thompson Sampling is a probabilistic, Bayesian approach to the MAB problem. It maintains a posterior distribution over each arm's mean reward. At every round, a reward estimate is sampled from each posterior, and the arm with the highest sampled value is chosen.

\begin{enumerate}
  \item \textbf{Initialization:} Assume a prior distribution for the mean reward of each arm. For Bernoulli rewards, a natural choice is the Beta distribution, $Beta(\alpha_a, \beta_a)$, initialized with $\alpha_a = 1, \beta_a = 1$ (a uniform prior).
  
  \item \textbf{Loop at each time $t$:}
  \begin{itemize}
    \item For each arm $a$, sample a value $\theta_a(t)$ from its current posterior distribution $Beta(\alpha_a, \beta_a)$.
    \item Select the arm $a_t = \arg\max_{a} \theta_a(t)$.
    \item Observe the reward $r_t \in \{0, 1\}$.
    \item Update the posterior distribution for the chosen arm:
        \begin{equation}
        \alpha_{a_t} \leftarrow \alpha_{a_t} + r_t, \quad \beta_{a_t} \leftarrow \beta_{a_t} + (1 - r_t)
        \end{equation}
  \end{itemize}
\end{enumerate}

Intuitively, the sampling step automatically balances exploration and exploitation. An arm with high uncertainty (a wide posterior) has a higher chance of being sampled with a high value $\theta_a$, even if its current empirical mean is low. As an arm is pulled more often, its posterior distribution narrows around the true mean, and the sampled values become more consistent. For Thompson Sampling, a frequentist regret bound of $O(\ln T)$ has also been established.

\begin{algorithm}[h]
\caption{Thompson Sampling for Bernoulli Bandits}
\label{alg:thompson}
\begin{algorithmic}[1]
\STATE Initialize: $\alpha_a = 1$, $\beta_a = 1$ for all arms $a = 1, \dots, K$
\FOR{$t = 1, 2, \dots, T$}
    \FOR{each arm $a = 1, \dots, K$}
        \STATE Sample $\theta_a(t) \sim \text{Beta}(\alpha_a, \beta_a)$
    \ENDFOR
    \STATE Pull arm $a_t = \arg\max_{a} \theta_a(t)$
    \STATE Observe reward $r_t$
    \STATE Update: $\alpha_{a_t} \leftarrow \alpha_{a_t} + r_t$, $\beta_{a_t} \leftarrow \beta_{a_t} + (1 - r_t)$
\ENDFOR
\end{algorithmic}
\end{algorithm}

\subsection{Regret Analysis}

The regret analysis for both algorithms relies on bounding the number of times a suboptimal arm is selected. Let $\Delta_a = \mu^* - \mu_a$ be the suboptimality gap of arm $a$. For UCB, it can be shown that:

\begin{equation}
\mathbb{E}[n_a(T)] \leq \frac{8 \ln T}{\Delta_a^2} + O(1)
\end{equation}

which leads to the regret bound:
\begin{equation}
R_T \leq \sum_{a:\Delta_a > 0} \left( \frac{8 \ln T}{\Delta_a} + O(1) \right)
\end{equation}

For Thompson Sampling, a similar logarithmic regret bound can be established through Bayesian regret analysis or frequentist techniques, though the derivation is more involved due to the probabilistic nature of the algorithm.

\subsection{CAS Method Selection}

The Thompson Sampling approach assumes that rewards follow a fixed, known distribution (such as normal distribution or Beta distribution). However, in the fine-tuning process with multiple robustness, it is difficult to determine which specific probability distribution the rewards follow. The UCB algorithm avoids this issue and is more convenient in practice. Furthermore, their regret upper bounds are relatively similar. Therefore, in the CAS method, we have chosen the UCB algorithm to balance exploration and exploitation.

\section{Theoretical Details and Proofs}
\label{app: drift}

\paragraph{Setup and assumptions.}
Let $\mathcal{R}_p(\theta)$ and $\mathcal{R}_q(\theta)$ denote robust risks for two attack families $\mathcal{S}_p,\mathcal{S}_q$:
\begin{equation}
\mathcal{R}_i(\theta)=\mathbb{E}_{(x,y)\sim\mathcal{S}_i}\!\left[\max_{\delta\in\Delta_i}\mathcal{L}(f_\theta(x+\delta),y)\right],\quad i\in\{p,q\}.
\end{equation}
Define the average robust risk
\begin{equation}
\mathcal{R}_{avg}(\theta)=\tfrac{1}{2}\big(\mathcal{R}_p(\theta)+\mathcal{R}_q(\theta)\big)
\end{equation}
We make the following local regularity assumptions:
\begin{assumption}[Danskin and interchange]
For almost every $(x,y)$ the inner maximizer $\delta_i^*(\theta;x,y)$ is locally unique and sufficiently smooth in a neighborhood of $\theta_1$, so that the pointwise map
\(\ell_i(\theta;x,y):=\max_{\delta\in\Delta_i}\mathcal{L}(f_\theta(x+\delta),y)\)
is differentiable at $\theta_1$. Furthermore, there exists an integrable dominating function $g(x,y)$ such that $\|\nabla_\theta \ell_i(\theta;x,y)\|\le g(x,y)$ in a neighborhood of $\theta_1$. Under these conditions differentiation and expectation may be interchanged, yielding $
\nabla\mathcal{R}_i(\theta_1)=\mathbb{E}_{(x,y)\sim\mathcal{S}_i}\big[\nabla_\theta\ell_i(\theta_1;x,y)\big].$
\end{assumption}
\begin{assumption}[Second-order regularity] $\mathcal{R}_{avg}$ is twice continuously differentiable in a neighborhood of $\theta_1$, with Hessian $H_{avg}:=\nabla^2 \mathcal{R}_{avg}(\theta_1)$. Let $\lambda_{\max}(H_{avg})$ denote its largest eigenvalue at $\theta_1$.
\end{assumption}
\begin{assumption}[Positive spectral direction] We assume that the Hessian matrix $H_{avg}$ of $\mathcal{R}_{avg}$ at $\theta_1$ is not negative semi-definite, i.e., $\lambda_{\max}(H_{avg}) > 0$. Otherwise, $\mathcal{R}_{avg}$ would attain a local maximum at $\theta_1$. This would imply that, after adversarial training with respect to $\mathcal{R}_{q}$, the average robust risk over $S_p$ and $S_q$ reaches a local maximum with respect to the model parameters. Such a case is not only mathematically of measure zero but also contradicts the intuition of adversarial training.
\end{assumption}

\paragraph{Parameter drift formulation.}
Let the second fine-tuning phase move parameters by $\Delta\theta := \theta_2 - \theta_1$. In the small-step approximation we assume $\Delta\theta$ is aligned with the negative gradient of $\mathcal{R}_q$, i.e. $\Delta\theta= -\eta\,\nabla\mathcal{R}_q(\theta_1)$ for small $\eta>0$. Define the alignment angle $\psi$ between $\nabla\mathcal{R}_{avg}(\theta_1)$ and $\nabla\mathcal{R}_q(\theta_1)$ by
\begin{equation}
\nabla\mathcal{R}_{avg}(\theta_1)^\top \Delta\theta=-\|\nabla\mathcal{R}_{avg}(\theta_1)\|\,\|\Delta\theta\|cos\psi.
\end{equation}

\begin{lemma}[Average robust risk bound]
The change in average robust risk can be bounded as:
\begin{equation}
\Delta\mathcal{R}_{avg} \lesssim -\|\nabla\mathcal{R}_{avg}(\theta_1)\| \|\Delta\theta\| \cos\psi + \tfrac{1}{2}\lambda_{\max}(H_{avg}) \|\Delta\theta\|^2
\end{equation}
Where $\psi$ denotes the angle between $\nabla\mathcal{R}_{avg}(\theta_1)$ and $\nabla\mathcal{R}_{q}(\theta_1)$, $H_{avg}$ is the local Hessian of $\mathcal{R}_{avg}$, $\lambda_{\max}(H_{avg})$ denote its largest eigenvalue at $\theta_1$.
\end{lemma}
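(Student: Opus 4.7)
The plan is to derive the stated bound from a local second-order Taylor expansion of $\mathcal{R}_{avg}$ around $\theta_1$, combined with the directional information supplied by the parameter-drift formulation. By Assumption~2, $\mathcal{R}_{avg}$ is $C^2$ in a neighborhood of $\theta_1$, so I can write
\begin{equation}
\mathcal{R}_{avg}(\theta_2) = \mathcal{R}_{avg}(\theta_1) + \nabla\mathcal{R}_{avg}(\theta_1)^\top \Delta\theta + \tfrac{1}{2}\,\Delta\theta^\top H_{avg}(\xi)\,\Delta\theta
\end{equation}
for some $\xi$ on the segment joining $\theta_1$ and $\theta_2$. Subtracting $\mathcal{R}_{avg}(\theta_1)$ isolates $\Delta\mathcal{R}_{avg}$ as the sum of a first-order inner product and a quadratic form; each will be matched to one of the two terms in the target inequality.

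Next, I would handle the first-order term. Since the second fine-tuning phase is taken along the adversarial gradient of the $S_q$ task, $\Delta\theta = -\eta\,\nabla\mathcal{R}_q(\theta_1)$, so by the very definition of $\psi$ given in the parameter-drift formulation,
\begin{equation}
\nabla\mathcal{R}_{avg}(\theta_1)^\top \Delta\theta \;=\; -\,\|\nabla\mathcal{R}_{avg}(\theta_1)\|\,\|\Delta\theta\|\,\cos\psi,
\end{equation}
which is exactly the leading linear term in the target bound. For the quadratic piece, I would apply the Rayleigh variational bound $v^\top H v \le \lambda_{\max}(H)\,\|v\|^2$, valid because Assumption~3 ensures $\lambda_{\max}(H_{avg})>0$, so the upper bound is non-vacuous. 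This directly yields $\tfrac{1}{2}\Delta\theta^\top H_{avg}(\theta_1)\Delta\theta \le \tfrac{1}{2}\lambda_{\max}(H_{avg})\,\|\Delta\theta\|^2$ at the reference point $\theta_1$.

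The main obstacle will be the last bookkeeping step: justifying the replacement of the mean-value point $H_{avg}(\xi)$ with $H_{avg}(\theta_1)$ so that the bound is stated cleanly in terms of the Hessian at $\theta_1$, as promised in the lemma. The clean way is to switch from the Lagrange form of the remainder to the integral form $\int_0^1(1-s)\,\Delta\theta^\top H_{avg}(\theta_1+s\Delta\theta)\,\Delta\theta\,ds$, then split $H_{avg}(\theta_1+s\Delta\theta) = H_{avg}(\theta_1) + \bigl[H_{avg}(\theta_1+s\Delta\theta)-H_{avg}(\theta_1)\bigr]$ and control the bracketed deviation via local uniform continuity of $H_{avg}$ on a small compact neighborhood (or local Lipschitzness, if one strengthens the smoothness hypothesis). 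The resulting correction is $o(\|\Delta\theta\|^2)$ as $\|\Delta\theta\|\to 0$ and is precisely the higher-order residual absorbed by the $\lesssim$ notation, completing the derivation of the stated inequality.
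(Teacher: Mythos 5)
Your proposal is correct and follows essentially the same route as the paper: a second-order Taylor expansion of $\mathcal{R}_{avg}$ at $\theta_1$, the identity $\nabla\mathcal{R}_{avg}(\theta_1)^\top\Delta\theta=-\|\nabla\mathcal{R}_{avg}(\theta_1)\|\,\|\Delta\theta\|\cos\psi$ from the drift formulation, and the spectral bound $\Delta\theta^\top H_{avg}\Delta\theta\le\lambda_{\max}(H_{avg})\|\Delta\theta\|^2$. Your only addition is a more careful treatment of the remainder (integral form plus continuity of the Hessian) where the paper simply discards an $O(\|\Delta\theta\|^3)$ term under the $\lesssim$ notation; this is a refinement, not a different argument.
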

\begin{proof}
Expanding $\mathcal{R}_{avg}$ around $\theta_1$ gives
\begin{equation}
\begin{aligned}
\mathcal{R}_{avg}(\theta_2)-\mathcal{R}_{avg}(\theta_1)
= \nabla\mathcal{R}_{avg}(\theta_1)^\top \Delta\theta\\
+ \tfrac{1}{2}\,\Delta\theta^\top H_{avg}\,\Delta\theta
+ O(\|\Delta\theta\|^3).
\end{aligned}
\label{eq:appendix_taylor}
\end{equation}
Ignoring $O(\|\Delta\theta\|^3)$ for small steps, let
\begin{equation}
\mathcal{R}_{avg}(\theta_2)-\mathcal{R}_{avg}(\theta_1)\approx \nabla\mathcal{R}_{avg}(\theta_1)^\top \Delta\theta
+ \tfrac{1}{2}\,\Delta\theta^\top H_{avg}\,\Delta\theta
\end{equation}

Using the spectral upper bound $\Delta\theta^\top H_{avg}\Delta\theta \le \lambda_{\max}(H_{avg})\|\Delta\theta\|^2$, we obtain:
\begin{equation}
\Delta\mathcal{R}_{avg} \lesssim -\|\nabla\mathcal{R}_{avg}(\theta_1)\| \|\Delta\theta\| \cos\psi + \tfrac{1}{2}\lambda_{\max}(H_{avg}) \|\Delta\theta\|^2
\end{equation}
This scaling is conservative. If the update direction $\hat\Delta=\Delta\theta/\|\Delta\theta\|$ is known, the quadratic term can be tightened to $\tfrac12\,R(H_{avg},\hat\Delta)\|\Delta\theta\|^2$, where $R(H_{avg},\hat\Delta)$ is the Rayleigh quotient about $H_{avg}$ and the direction of $\hat\Delta$.
\end{proof}

\begin{proposition}[Safe parameter-drift threshold]
Let $\lambda_{\max}(H_{avg})$ denote the largest eigenvalue of the local Hessian of $\mathcal{R}_{avg}$ at $\theta_1$. A sufficient condition to prevent average robustness degradation after the sequential fine-tuning in the order of $S_p$ followed by $S_q$ is:
\begin{equation}
\|\Delta\theta\| < \frac{2 \|\nabla\mathcal{R}_{avg}(\theta_1)\| \cos\psi}{\lambda_{\max}(H_{avg})}, \quad \psi < \frac{\pi}{2}
\end{equation}
\end{proposition}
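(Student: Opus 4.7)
The proof will proceed as a direct algebraic consequence of Lemma~1, treating the proposition as solving for the range of $\|\Delta\theta\|$ that keeps the upper bound on $\Delta\mathcal{R}_{avg}$ non-positive. My plan is to first write down the lemma's bound and interpret the safety criterion ``no average robustness degradation'' as the requirement $\Delta\mathcal{R}_{avg} \le 0$. Since the lemma already gives a scalar upper estimate, it suffices to enforce non-positivity of the right-hand side, which reduces the problem to a one-variable quadratic inequality in $\|\Delta\theta\|$.

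Concretely, I would denote $u := \|\Delta\theta\|$, $g := \|\nabla\mathcal{R}_{avg}(\theta_1)\|$, and $\lambda := \lambda_{\max}(H_{avg})$, so that the bound becomes $\Delta\mathcal{R}_{avg} \lesssim -g\,u\,\cos\psi + \tfrac{1}{2}\lambda\,u^2$. For this expression to be non-positive with $u>0$, I divide by $u$ to obtain the linear condition $-g\cos\psi + \tfrac{1}{2}\lambda\,u \le 0$, and then isolate $u$ to obtain $u \le 2g\cos\psi/\lambda$. The strict inequality stated in the proposition corresponds to ruling out the boundary case, ensuring the upper bound is strictly negative. This gives exactly the threshold in Eq.~\eqref{eq:critical_drift}.

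Two sanity conditions have to be verified for the bound to be meaningful. First, the right-hand side must be positive; since $g \ge 0$ and $\lambda > 0$ (Assumption~3, positive spectral direction), the sign is controlled entirely by $\cos\psi$, which forces the condition $\psi < \pi/2$ stated in the proposition. Otherwise, if $\cos\psi \le 0$, the linear term in $u$ is non-negative and the whole quadratic cannot be forced negative by shrinking $u$, so no safe drift threshold exists. Second, the positivity of $\lambda_{\max}(H_{avg})$ rules out a vacuous division, and this follows from Assumption~3 by the authors' argument that a negative semi-definite Hessian would contradict the adversarial training setup.

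The step requiring the most care is not the algebra but making clear that the proposition is a \emph{sufficient} condition derived from an \emph{upper bound}: one should emphasize that the Taylor remainder $O(\|\Delta\theta\|^3)$ has been absorbed into the ``$\lesssim$'' symbol, so the conclusion is valid only in the small-step regime where cubic terms are negligible relative to the quadratic surrogate. I would therefore close the proof by noting that, while tighter thresholds can be obtained by replacing $\lambda_{\max}(H_{avg})$ with the Rayleigh quotient $R(H_{avg},\hat{\Delta})$ along the actual update direction $\hat{\Delta} = \Delta\theta/\|\Delta\theta\|$, the stated spectral bound provides a direction-free, conservative criterion that suffices for the qualitative conclusion that sequential fine-tuning fails once drift, gradient misalignment, or local curvature grow too large.
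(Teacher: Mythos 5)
Your proposal is correct and follows essentially the same route as the paper: take the upper bound from Lemma~\ref{lemma}, require it to be strictly negative, and solve the resulting quadratic inequality in $\|\Delta\theta\|$ using $\lambda_{\max}(H_{avg})>0$ from the positive spectral direction assumption and $\cos\psi>0$ from $\psi<\pi/2$. Your added remarks on the small-step regime absorbed into ``$\lesssim$'' and the Rayleigh-quotient tightening mirror the paper's own caveats, so there is nothing to correct.
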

\begin{proof}
Based on Lemma~\ref{lemma}, a sufficient condition to ensure that the average robust risk continues to decrease under parameter drift is:
\begin{equation}
\begin{aligned}
\Delta\mathcal{R}_{avg} &\lesssim -\|\nabla\mathcal{R}_{avg}(\theta_1)\| \|\Delta\theta\| \cos\psi \\&+ \tfrac{1}{2}\lambda_{\max}(H_{avg}) \|\Delta\theta\|^2<0
\end{aligned}
\end{equation}
Assuming $H_{avg}(\theta_1)$ is not negative semi-definite, so that $\lambda_{\max}(H_{avg})>0$, solving the inequality for $\|\Delta\theta\|$ yields the safe threshold:
\begin{equation}
\|\Delta\theta\| < \frac{2 \|\nabla\mathcal{R}_{avg}(\theta_1)\| \cos\psi}{\lambda_{\max}(H_{avg})}
\end{equation}
Hence, if $\|\Delta\theta\|$ exceeds this bound, sequential fine-tuning may increase $\mathcal{R}_{avg}$, i.e., reduces average robustness. 
$\square$
\end{proof}

\paragraph{Remarks.}
\begin{itemize}
  \item The bound in Proposition~\ref{prop} is sufficient but not necessary. It is conservative because of the spectral upper bound used on the Hessian term; a tighter estimate of the Rayleigh quotient yields a less conservative threshold.
  \item In practice the quantities $\|\nabla\mathcal{R}_{avg}\|$, $\cos\psi$ and $\lambda_{\max}(H_{avg})$ may be estimated from mini-batches and Hessian-vector products (e.g. via power iteration / Pearlmutter products). Because of sampling noise these estimates should be treated as guidance rather than absolute thresholds.
\end{itemize}

\section{Multi-Objective Optimization Perspective on Cross-Type Trade-off}
\label{app: external}
\subsection{Externality-Aware Equilibrium}

When training proceeds for a sufficiently long period, the uncertainty in reward estimation diminishes as each perturbation type has been sampled enough times. In this asymptotic regime, the influence of the exploration term in the UCB score becomes negligible, since
\begin{equation}
\lim_{t \to \infty}\sqrt{\frac{2 \log N}{N_v}} \;\xrightarrow\; 0,
\end{equation}
and the sampling probability in Eq.~(\ref{eq:sampling}) simplifies to a softmax-augmented exploitation form:
\begin{equation}
\pi_v \;\propto\; w_v \cdot \exp(\alpha \cdot R_v),
\label{eq:equilibrium_prob}
\end{equation}
where $R_v = R_v^{\text{self}} + R_v^{\text{tradeoff}}$ includes both marginal and cross-type effects.  

This asymptotic distribution represents an \emph{externality-aware equilibrium}, in which the long-term sampling behavior aligns with the relative overall contribution of each perturbation type to system-wide robustness. Perturbations that yield positive spillover effects naturally gain higher long-run probabilities, while those producing negative externalities are suppressed.  

Consequently, the CAS algorithm converges to a stable allocation on the Pareto frontier, where the equilibrium mixture of perturbation types reflects not only their intrinsic robustness importance ($w_v$) but also their dynamic rewards ($R_v$). This leads to a self-organizing and globally balanced fine-tuning process.

\subsection{Pareto-Frontier Perspective on Cross-Type Trade-off}

From a multi-objective optimization perspective, the process of adversarial fine-tuning across multiple perturbation types can be viewed as searching along a Pareto frontier of robustness objectives $\{\mathcal{R}_{p_1}, \ldots, \mathcal{R}_{p_M}\}$. Each objective corresponds to robustness under one adversarial type, and jointly optimizing them inevitably involves trade-offs.  

In the standard setting, the overall optimization direction is determined by the weighted gradient aggregation
\begin{equation}
\nabla_\theta \mathcal{R}_{\text{avg}} = \sum_{k=1}^M w_k \nabla_\theta \mathcal{R}_{p_k},
\end{equation}
where $w_k$ reflects the static relative importance of each perturbation type. This aggregation assumes that improvements in one objective do not introduce externalities to others, which is violated in practice. For instance, improving robustness against $\ell_\infty$ attacks may slightly degrade robustness against blur or color-shift perturbations, forming a locally concave Pareto surface.  

To explicitly account for such cross-type interactions, the CAS method introduces a cross-type trade-off term into the reward formulation. This mechanism dynamically adjusts the effective sampling weights according to each type’s net contribution (positive or negative) to the global robustness improvement. Geometrically, this modification changes the optimization trajectory from a fixed-weight combination 
\begin{equation}
\sum_k w_k \nabla_\theta \mathcal{R}_{p_k}
\end{equation}
to a reweighted direction 
\begin{equation}
\sum_k w_k \nabla_\theta \mathcal{R}_{p_k}*\exp(\alpha R_k),
\end{equation}
that effectively steering the update vector toward regions of the Pareto frontier with higher joint gains. Instead of optimizing only along a pre-defined linear scalarization of the objectives, CAS adaptively reshapes the frontier traversal according to dynamic inter-type feedback, allowing the fine-tuning process to converge toward more globally balanced robustness equilibria.  

In summary, introducing cross-type trade-off terms transforms the optimization landscape from a static scalarized objective into a \textit{dynamically calibrated Pareto search}. By rewarding perturbation types that produce positive spillover effects and penalizing those inducing interference, CAS achieves more equitable and globally efficient convergence across heterogeneous robustness objectives.

\subsection{Economic Motivation Behind Pareto Frontier and Reward Design in CAS}
\label{subsec:econ_motivation}

The concept of the Pareto frontier in multi-objective learning is directly inspired by the notion of \emph{Pareto efficiency} in economics, which characterizes an allocation where no objective can be improved without worsening another. In the context of multi-robustness optimization, each perturbation type can be regarded as an ``agent'' competing for model capacity—a limited resource analogous to economic capital or utility. The CAS framework extends this analogy by introducing \emph{externality-aware rewards}, effectively capturing how the training on one perturbation type influences others. 

This heuristic motivation underscores a deep parallel: In the presence of externalities, the free market mechanism cannot automatically achieve Pareto optimality, and government or institutional intervention becomes necessary. According to the Coase theorem, the problem of externalities does not lie in the existence of costs themselves, but in the failure to assign these costs correctly, that is, to internalize them. We extend this idea to multi-robustness fine-tuning, by internalizing the impact of each adversarial training component on other robustness dimensions within the reward design. From an economic standpoint, the inclusion of cross-type trade-off terms resembles a system of \emph{Pigouvian adjustments}, where positive externalities are rewarded and negative ones are penalized to achieve social welfare optimality. Here, the ``social welfare'' corresponds to the collective robustness performance across all perturbation types, and CAS dynamically regulates training probabilities to approximate this welfare-optimal equilibrium.

\section{Empirical Analysis of Trade-off Matrix}
\label{app:multi-robustness}
\begin{figure*}[ht]
    \centering
    \includegraphics[width=1\linewidth]{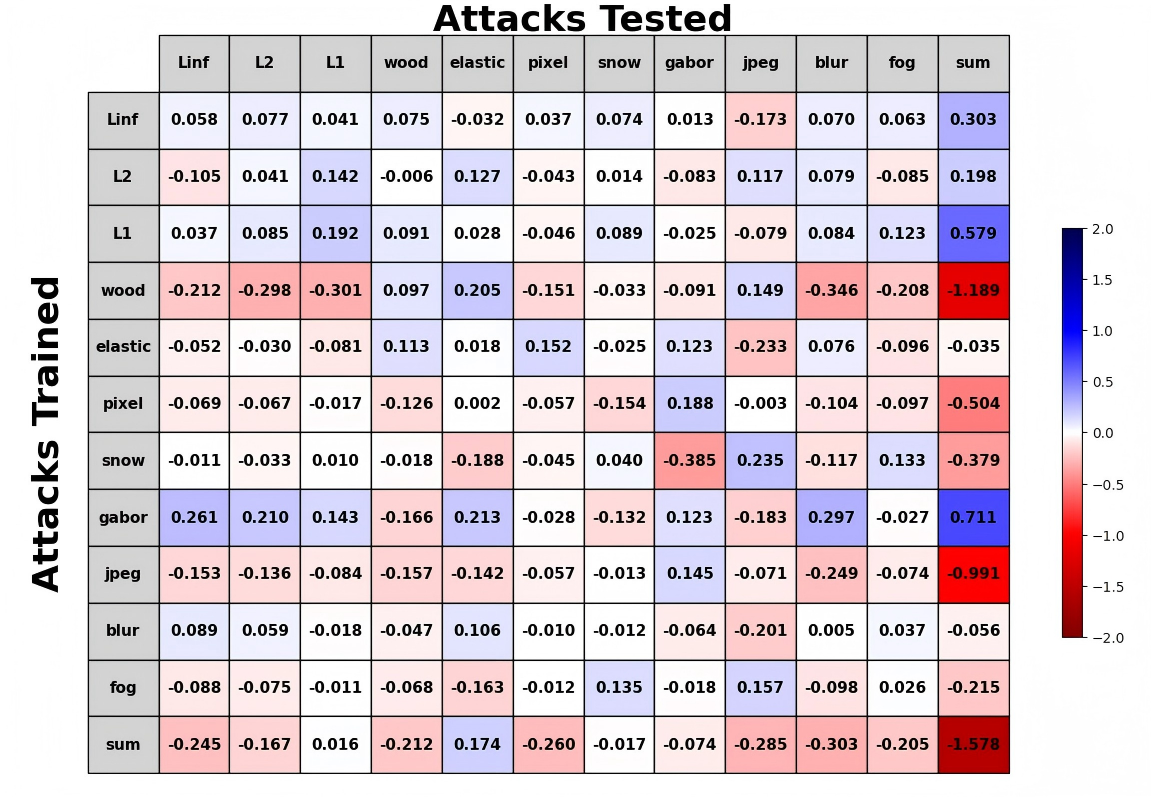}
    \caption{Detailed visualization of the trade-off matrix. In the figure, each number represents the change in robust accuracy for its corresponding attack type after the sequential fine-tuning against designated types. Each row shows the data fine-tuned against the adversarial attack labeled on the left; each column shows the data evaluated with the attack labeled above.}
    \label{app: tradeoff}
\end{figure*}
The trade-off matrix presented in Figure~\ref{app: tradeoff} provides a comprehensive quantitative view of the complex interactions between different adversarial perturbation types during sequential fine-tuning. This analysis reveals critical insights into the Pareto-optimal navigation strategy employed by our CAS method.

\subsection{Matrix Structure and Interpretation}
The trade-off matrix establishes a systematic framework for understanding how training on one perturbation type influences robustness against others. Each entry represents the directional effect between perturbation pairs, where positive values indicate beneficial transfer, negative values reveal detrimental interference, and near-zero values suggest independence. Diagonal entries are mostly positive, confirming that attack-specific training enhances self-robustness, while off-diagonal values reveal substantial trade-offs. (The phenomenon that some adversarial training even reduces its own robustness may be due to the loss landscape being unsmoothed at the early stage of fine-tuning, leading to unstable robustness. Increasing the number of fine-tuning epochs for each attack type in sequential fine-tuning might help prevent this issue.) The row and column sums serve as aggregate indicators of each perturbation type's overall impact on the multi-objective robustness landscape. 

\subsection{Dominant Positive Transfer Patterns}
Analysis reveals that certain perturbation types serve as powerful catalysts for broad robustness improvements. Gabor filter attacks emerge as particularly effective, demonstrating widespread positive transfer across multiple perturbation families including norm-based attacks and elastic transformations. This suggests that Gabor training induces features with generalizable robustness properties. Similarly, L1 attacks show strong self-improvement coupled with positive spillover effects, particularly benefiting geometric and weather-based perturbations. These patterns indicate that carefully selected perturbation types can transcend their specific domains to enhance overall model resilience, providing valuable guidance for efficient multi-objective optimization.

\subsection{Problematic Negative Transfer Cases}
Conversely, several perturbation types exhibit consistently detrimental effects on overall robustness. JPEG compression demonstrates the most severe negative transfer. Wood perturbations similarly show widespread interference, particularly compromising robustness against common corruptions. These findings highlight the critical importance of identifying and mitigating such negative transfer effects, as indiscriminate training on problematic perturbation types can undermine the entire multi-robustness objective through destructive interference.

\subsection{Asymmetric Transfer Relationships}
The matrix reveals intriguing asymmetric relationships where transfer effects differ dramatically based on directionality. For instance, while Linf training moderately benefits L2 robustness, the reverse relationship proves detrimental, with L2 training actually degrading Linf performance. Similar asymmetries appear between elastic and snow perturbations, where snow training has disproportionately negative effects on elastic robustness compared to the reciprocal relationship. These directional dependencies underscore the limitations of symmetric treatment in multi-perturbation training and emphasize the need for carefully calibrated, direction-aware sampling strategies.

\subsection{Implications for Reward Design}
The empirical patterns observed in the trade-off matrix provide strong validation for our cross-type reward design. The systematic quantification of transfer effects enables intelligent navigation toward Pareto-optimal regions where positive transfer dominates, while avoiding conflict zones where robustness objectives severely compete. By dynamically incorporating these relationships into the reward mechanism, our approach naturally favors perturbation types that serve as robustness catalysts while penalizing those causing destructive interference. This data-driven strategy transforms the complex multi-objective optimization problem into a tractable navigation task, where the trade-off matrix serves as both validation of our approach and guidance for its continuous refinement.

\section{Detailed Experimental Results and Supplementary Analysis}
\label{app: ablation}
\subsection{Module Contribution Ablation} 
As shown in Table~\ref{tab: component}, removing $R^{\text{tradeoff}}$ consistently leads to a noticeable performance drop across all datasets, especially in terms of robustness metrics. This indicates that $R^{\text{tradeoff}}$ plays a crucial role in mitigating cross-type trade-offs by explicitly guiding the optimization toward a better Pareto frontier. In contrast, the removal of UCB shows only a slight degradation in robustness, while clean accuracy remains nearly unchanged. This suggests that UCB mainly stabilizes training by adaptively adjusting the sampling of attack types according to their estimated uncertainty, rather than directly contributing large numerical gains. Therefore, $R^{\text{tradeoff}}$ primarily governs the effectiveness of robustness trade-off learning, while UCB enhances training stability and ensures consistent convergence across diverse attack distributions.
\begin{table*}[htbp]
\centering
\caption{Ablation study of our CAS method about the component of $R^{\text{tradeoff}}$ and UCB. Both the definitions and the calculation of \textbf{avg. $\ell_p$} and \textbf{avg. Corruption} are the same as in the main experiment. The table reports the average results over five independent runs on three datasets.}
\begin{tabular}{c|c|cccc}
\toprule
\textbf{Dataset} & \textbf{Method} & \textbf{Clean} & \textbf{avg. Robust} & \textbf{avg. $\ell_p$} & \textbf{avg. Semantic} \\
\midrule
\multirow{3}*{CIFAR-10}& CAS & \textbf{85.26} & \textbf{51.79} & \textbf{52.91} & \textbf{50.67}\\
& CAS w/o $R^{\text{tradeoff}}$ & 84.40& 51.42& 52.70& 50.14\\
& CAS w/o UCB & 85.08 & 51.65 & 52.80 & 50.51\\
\midrule
\multirow{3}*{CIFAR-100}& CAS & 58.54 & \textbf{27.99} & \textbf{29.23} & 26.75\\
& CAS w/o $R^{\text{tradeoff}}$ & 58.00& 27.54 & 28.77& 26.32\\
& CAS w/o UCB & \textbf{58.68} & 27.76 & 28.50 & \textbf{27.01}\\
\midrule
\textbf{Dataset} & \textbf{Method} & \textbf{Clean} & \textbf{avg. Robust} & \textbf{avg. Corruption} & \textbf{avg. Others}\\
\midrule
\multirow{3}*{SVHN}& CAS & 93.86 & \textbf{57.76} & \textbf{63.81} & \textbf{51.71}\\
& CAS w/o $R^{\text{tradeoff}}$ & \textbf{94.24}& 55.77& 62.59& 48.94\\
& CAS w/o UCB & 93.80 & 57.59 & 63.52 & 51.66\\
\bottomrule
\end{tabular}
\label{tab: component}
\end{table*}
\begin{figure*}[htbp]
    \centering
    \includegraphics[width=1\linewidth]{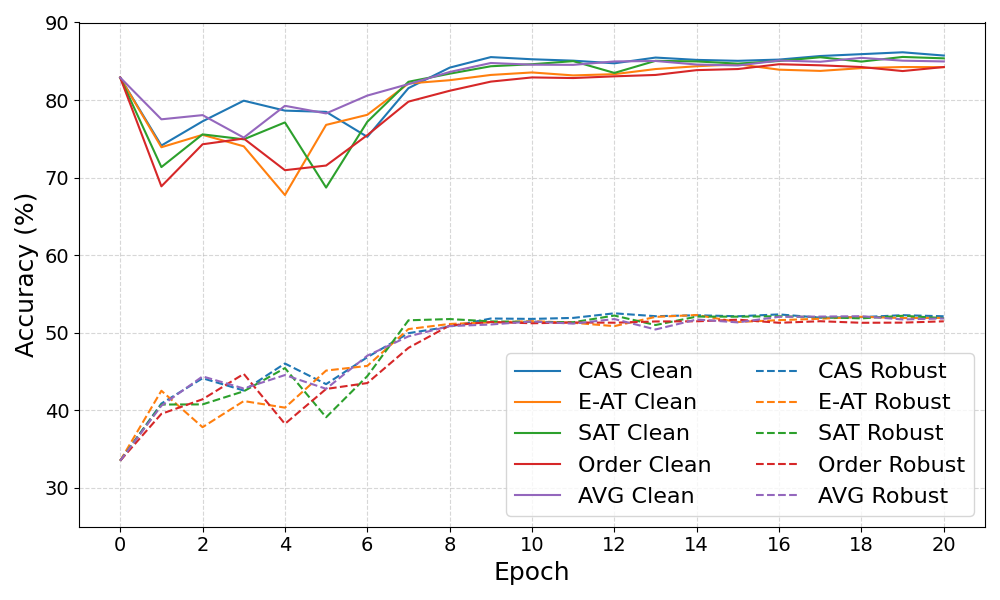}
    \caption{Ablation analysis on training epochs, conducted on the CIFAR-10 dataset. Robust accuracy is the weighted average; all accuracy values are means of five independent runs.}
    \label{fig:cifar10}
\end{figure*}
\begin{figure*}[htbp]
    \centering
    \includegraphics[width=1\linewidth]{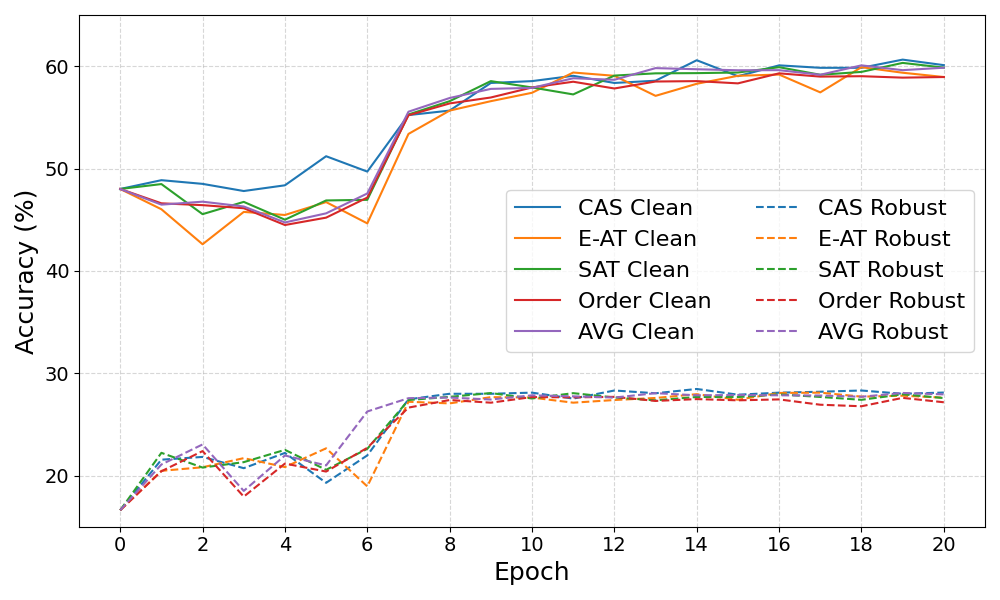}
    \caption{Ablation analysis on training epochs, conducted on the CIFAR-100 dataset. Robust accuracy is the weighted average; all accuracy values are means of five independent runs.}
    \label{fig:cifar100}
\end{figure*}
\begin{table*}[htbp]
\centering
\small
\caption{Sensitivity analysis on hyperparameters $\alpha$ and $\beta$, where $\alpha$ is used to balance exploration and exploitation, and $\beta$ is used to balance clean accuracy and robustness. The results in the table are averaged over five independent runs.}
\begin{tabular}{c|c|cc|c|cc||c|ccc|c|c}
\toprule
\textbf{Dataset} & \textbf{$\beta=\frac89,\alpha=?$} & \textbf{1} & \textbf{5}& \textbf{10} & \textbf{20} & \textbf{50} & \textbf{$\alpha=10,\beta=?$} & $\frac{\textbf{1}}{\textbf{2}}$& $\frac{\textbf{2}}{\textbf{3}}$ & $\frac{\textbf{4}}{\textbf{5}}$ & $\frac{\textbf{8}}{\textbf{9}}$ & $\frac{\textbf{16}}{\textbf{17}}$\\
\midrule
\multirow{4}*{CIFAR-10} &Clean         & 84.32 & 84.68 & 85.26 & \textbf{85.42} & 85.04 &Clean& \textbf{86.02} & 85.76 & 85.36 & 85.26 & 85.12\\
&avg. Robust   & 51.68 & 51.50 & \textbf{51.79} & 51.55 & 50.83 &avg. Robust& 50.42 & 50.96 & 51.45 & 51.79 & \textbf{51.81}\\
&avg. $\ell_p$ & 53.10 & \textbf{53.27} & 52.91 & 52.83 & 50.60 &avg. $\ell_p$& 51.19 & 51.99 & 52.36 & \textbf{52.91} & 52.90\\
&avg. Semantic & 50.26 & 49.73 & 50.67 & 50.26 & \textbf{51.05} &avg. Semantic& 49.66 & 49.93 & 50.55 & 50.67 & \textbf{50.73}\\
\midrule
\multirow{4}*{CIFAR-100} &Clean         & 58.46 & \textbf{58.62} & 58.54 & 58.12 & 57.48 &Clean& \textbf{58.88} & 58.74 & 58.50 & 58.54 & 58.46\\
&avg. Robust   & 27.68 & \textbf{28.05} & 27.99 & 27.88 & 27.61 &avg. Robust& 27.46 & 27.77 & 28.01 & 27.99 & \textbf{28.08}\\
&avg. $\ell_p$ & 28.89 & \textbf{29.37} & 29.23 & 28.87 & 28.03 &avg. $\ell_p$& 28.74 & 29.00 & \textbf{29.30} & 29.23 & 29.29\\
&avg. Semantic & 26.47 & 26.72 & 26.75 & 26.89 & \textbf{27.19} &avg. Semantic& 26.28 & 26.53 & 26.71 & 26.75 & \textbf{26.87}\\
\midrule
\multirow{4}*{SVHN} &Clean         & 93.80 & \textbf{93.98} & 93.86 & 93.58 & 92.98 &Clean& \textbf{94.03} & 93.87 & 93.89 & 93.86 & 93.82\\
&avg. Robust   & 57.45 & 57.73 & \textbf{57.76} & 57.62 & 57.17 &avg. Robust& 57.41 & 57.61 & 57.68 & 57.76 & \textbf{57.90}\\
&avg. Corruption & 63.57 & \textbf{63.81} & \textbf{63.81} & 63.48 & 62.36 &avg. Corruption& 63.35 & 63.70 & 63.80 & 63.81 & \textbf{63.84}\\
&avg. Others & 51.32 & 51.64 & 51.71 & 51.77 & \textbf{51.99} &avg. Others& 51.47 & 51.53 & 51.55 & 51.71 & \textbf{51.96}\\
\bottomrule
\end{tabular}
\label{tab: hyper}
\end{table*}
\begin{table*}[htbp]
\centering
\caption{Generalization comparison on CIFAR-10 MultiRobustBench with different architectures. $CR_{avg}$ denotes the average competitiveness ratio. Baseline results are obtained from the official leaderboard, while CAS results are reproduced using our released implementation.}
\label{tab:mrb_cifar10_appendix}
\begin{tabular}{l|c|ccc}
\toprule
Architecture & Method & Clean & $CR_{avg}$ & PFLOPs \\
\midrule
ResNet-18
& Manifold Regularization / CAS
& 72.14 / \textbf{73.80}
& 58.74 / \textbf{61.87}
& 39.00 / \textbf{28.27}
\\
PreAct-ResNet18
& E-AT / CAS
& 82.41 / \textbf{83.62}
& \textbf{52.17} / 51.89
& \textbf{6.0} / 6.2
\\
WRN-28-10
& MNG-AC / CAS
& \textbf{81.45} / 80.73
& \textbf{63.54} / 61.05
& 621.0 / \textbf{52.4}
\\
ResNet-50
& PAT (Fast LPA) / CAS
& 71.58 / \textbf{72.46}
& \textbf{58.49} / 57.17
& 457.0 / \textbf{94.8}
\\
\bottomrule
\end{tabular}
\end{table*}

\subsection{Epochs} 
Figure~\ref{fig:cifar10} and Figure~\ref{fig:cifar100} illustrates the evolution of clean and robust accuracy during fine-tuning on the CIFAR-10 and CIFAR-100 datasets across different adversarial training schemes. For CIFAR-10, robust accuracy rapidly improves within the first 10 epochs, after which performance stabilizes. The clean accuracy fluctuates in the early stage and then converges to a stable level, indicating that moderate fine-tuning epochs are sufficient to achieve a balance between clean and robust performance. In contrast, for CIFAR-100, the overall accuracy levels are lower due to the increased task difficulty. Clean accuracy rises gradually from 45\% to around 60\%, whereas robust accuracy improves from 15\% to about 28\% over 20 epochs, showing slower convergence and higher sensitivity to training instability. In particular, the clean accuracy still exhibits unstable fluctuations even after 20 epochs. These results suggest that a longer fine-tuning schedule benefits datasets with higher complexity (e.g., CIFAR-100), where the loss landscape is more irregular and the optimization process requires more iterations to stabilize. Conversely, excessive fine-tuning on simpler datasets (e.g., CIFAR-10) yields diminishing returns and may even introduce mild overfitting. Overall, Our choice of 10 epochs achieves a good balance between time cost and performance across different datasets.

\subsection{Hyperparameters} 
Table~\ref{tab: hyper} presents the ablation results of the hyperparameters $\alpha$ and $\beta$, which respectively control the trade-offs between exploration–exploitation and clean–robust performance. When varying $\alpha$ under a fixed $\beta=\tfrac{8}{9}$, we observe that moderate values (around $\alpha=5$--$10$) consistently yield a good balance between clean and robust accuracy across datasets. Smaller $\alpha$ leads to underexploitation, limiting the improvement of robustness, whereas excessively large $\alpha$ overemphasizes exploitation and harms clean accuracy and robustness due to unstable parameter updates. In contrast, adjusting $\beta$ while fixing $\alpha=10$ reveals that increasing $\beta$ gradually shifts the model’s focus toward robustness, improving average robust accuracy but with sacrifices in clean performance.

This trend is more evident on CIFAR-10, where robustness gains saturate beyond $\beta=\tfrac{8}{9}$, suggesting that too strong a robustness constraint reduces optimization flexibility. However, different dataset tasks exhibit distinct preferences for the hyperparameters. For instance, CIFAR-10 favors larger values of $\alpha$ (around 10–20), while CIFAR-100 and SVHN achieve better performance when $\alpha$ is set between 5 and 10. A value of $\beta = \frac{8}{9}$ provides a good balance between clean accuracy and robustness for CIFAR-10 and CIFAR-100, whereas for SVHN, clean accuracy saturates more easily, and using a larger $\beta$ can further enhance robustness with relatively minor impact on clean accuracy. Across all datasets, the best configurations emerge when $\alpha$ and $\beta$ are jointly balanced, supporting our design intuition that moderate exploration and a proportional robustness weighting lead to stable and well-generalized models.

\subsection{Generalization across Architectures and Robustness Benchmarks}
To evaluate the generalization capability of CAS beyond the architectures and settings considered in the main experiments, we further conduct experiments on CIFAR-10 MultiRobustBench. We compare CAS with representative multi-robust training baselines under different architectures, including ResNet-18, PreAct-ResNet18, WRN-28-10, and ResNet-50. All CAS results are obtained using the same training and evaluation configurations released with our method, while baseline results~\cite{jin2020manifold, croce2022adversarial, madaan2021learning, laidlaw2021perceptualadversarialrobustnessdefense} are collected from the official MultiRobustBench~\cite{dai2023multirobustbench} leaderboard.

Table~\ref{tab:mrb_cifar10_appendix} reports the clean accuracy, average competitiveness ratio ($CR_{avg}$), and computational cost measured by PFLOPs. CAS consistently achieves competitive or superior robustness performance across different architectures. In particular, CAS improves $CR_{avg}$ on ResNet-18 and achieves comparable robustness on other architectures while substantially reducing computational overhead in several cases. Furthermore, MultiRobustBench aggregates robustness performance under different perturbation strengths to obtain the final score, providing additional evidence that our method generalizes well across different levels of perturbations. These results indicate that the adaptive attack sampling strategy is not tied to a specific network architecture and can generalize across different model capacities and training frameworks.

\subsection{Sensitivity Analysis of Attack Weights}
To further investigate the robustness of CAS under different attack weights choices, we conduct additional sensitivity experiments on CIFAR-10. We study the influence of the weight assigned to $\ell_p$ attacks while fixing the weights of all 18 semantic attacks to 1. As shown in Table~\ref{tab:weight_sensitivity_appendix}, varying the $\ell_p$ weight changes the balance between $\ell_p$ robustness and semantic robustness, but the overall robust accuracy remains stable. A larger $\ell_p$ weight improves robustness against norm-based attacks, while reducing the emphasis on semantic transformations. The default setting achieves a balanced trade-off between different attack categories.
\begin{table}[t]
\centering
\caption{Sensitivity analysis of the $\ell_p$ attack weight on CIFAR-10. Semantic attack weights are fixed to 1.}
\label{tab:weight_sensitivity_appendix}
\begin{tabular}{c|cccc}
\toprule
$\ell_p$ weight 
& 1 & 3 & 6 & 12 \\
\midrule
Clean 
& 84.10 & 84.80 & \textbf{85.26} & 84.50 \\
Avg. Robust
& \textbf{52.59} & 52.54 & 51.79 & 51.95 \\
Avg. $\ell_p$
& 50.37 & 52.57 & 52.91 & \textbf{53.18} \\
Avg. Semantic
& \textbf{52.96} & 52.52 & 50.67 & 49.19 \\
\bottomrule
\end{tabular}
\end{table}



\section{Promising Research Directions}
Finally, we reflect on the study's limitations and outline promising research directions:

1. Advancing trade-off characterization: Rigorous benchmarking frameworks or novel theoretical frameworks for multi-robustness trade-offs are waiting to be developed.

2. Expanding attack typology: New adversarial constraints could be proposed, or existing semantic attacks refined through granular categorization to clarify robustness conflicts/synergies.

3. Exploring composite perturbations: Stronger attacks may emerge from strategically combined perturbation types.

4. Formalizing non-algorithmic attacks: Attacks lacking clear algorithmic definitions warrant alternative formalizations or methodological innovations for systematic study.

\end{document}